%% file: main.tex
\documentclass[preprint]{article}

\usepackage[dvipsnames]{xcolor}
\usepackage{hyperref}
\usepackage{neurips_2026}
\usepackage{amssymb}
\usepackage{mathtools}          
\usepackage{mathrsfs}           
\usepackage{graphicx}           
\usepackage{subcaption}         
\usepackage[space]{grffile}     
\usepackage{url}                
\usepackage{lipsum}             
\usepackage{graphicx}
\usepackage{amsmath,amsfonts}
\usepackage{bm}
\usepackage{amsthm}
\usepackage{cleveref}
\usepackage{thmtools}
\usepackage{appendix}
\usepackage{microtype}
\usepackage{booktabs}
\usepackage{quiver}

\input{math_commands}

\hypersetup{
  linkcolor  = BrickRed,
  citecolor  = MidnightBlue,
  urlcolor   = Aquamarine,
  colorlinks = true,
}

\usepackage{tikz} 
\usepackage{tikzit}
\input{2cat.tikzstyles}

\allowdisplaybreaks

\title{World models of environment, agent and joint agent-environment systems}

\author{%
\begin{tabular}{c}
    Manuel Baltieri\textsuperscript{1,2} \quad
    Filippo Torresan\textsuperscript{1} \quad
    Yivan Zhang\textsuperscript{3}
    \\[0.25em]
    Alexander Boyd\textsuperscript{4} \quad
    Fernando E. Rosas\textsuperscript{2,5,6,7}
    \\[0.75em]
    \small
    \textsuperscript{1} Araya Inc.
    \\
    \small
    \textsuperscript{2} Department of Informatics, University of Sussex
    \\
    \small
    \textsuperscript{3} The University of Tokyo
    \\
    \small
    \textsuperscript{4} Beyond Institute for Theoretical Science (BITS)
    \\
    \small
    \textsuperscript{5} Sussex AI and Sussex Centre for Consciousness Science, University of Sussex
    \\
    \small
    \textsuperscript{6} Department of Brain Sciences, Imperial College London
    \\
    \small
    \textsuperscript{7} Centre for Eudaimonia and Human Flourishing, University of Oxford
    \\[0.75em]
    \small
    \texttt{manuel\_baltieri@araya.org} 
\end{tabular}
}

\begin{document}

% \makeCover  
\maketitle  

\begin{abstract}
    World models are a central component of model-based reinforcement learning.
    They are usually discussed in terms of what variables they predict, such as observations, rewards, states, latent or information states.
    We argue that there is a prior distinction: which channel they model.
    We consider three cases: the environment channel $\Observations_{:} \mid \Actions_{:}$, the agent channel $\Actions_{:} \mid \Observations_{:}$, and the realised joint process $(\Actions, \Observations)_{:}$, equivalently viewed as a channel with no inputs.
    Using computational mechanics, we define canonical predictive models for these three cases as $\epsilon$-transducers or $\epsilon$-machines.
    Canonical environment models recover standard predictive state representations, while the other two give analogous notions of canonical models for the agent and the joint system.
    We then build canonical support-restricted environment and agent models induced by closed-loop coupling, whose predictive equivalences range over continuations supported by the realised interaction.
    The key structural result is that canonical support-restricted environment states factor through the canonical joint causal states, and their transition structure is induced directly from the joint model; the agent-side construction is dual.
    Finally, we give a POMDP/controller example in which the unrestricted environment model has infinitely many states while the canonical support-restricted model induced by the coupling is finite.
    The framework clarifies what different world models are models of, and how coupling and support restriction can change their canonical predictive structure and complexity.
\end{abstract}

%============================================================
\section{Introduction}
%============================================================

World models are a central construct of modern (deep) reinforcement learning~\citep{haWorldModels2018, matsuoDeepLearningReinforcement2022, dingUnderstandingWorldPredicting2025, hafnerMasteringDiverseControl2025}.
Their formalisation usually relies on state or action-state abstractions, focusing mostly on fully-observable models~\citep{ravindranAlgebraicApproachAbstraction2004, abelTheoryAbstractionReinforcement2022}.
While often focused on modelling environments~\citep{niBridgingStateHistory2024, dingUnderstandingWorldPredicting2025}, world models of different kinds have increasingly emerged as relevant for agents to model themselves~\citep{haberLearningPlayIntrinsicallyMotivated2018, kanervistoWorldHumanAction2025}, other agents~\citep{zhouSocialWorldModels2025}, joint agent-environment ~\citep{meulemansEmbeddedUniversalPredictive2025} or even more complex systems~\citep{zhugeNeuralComputers2026}.

This proliferation suggests that world models should be distinguished not only by which variables they predict, see e.g.~\citep{niBridgingStateHistory2024} for world models of the environment, but also by which channel they model.
In the setting of agent-environment interaction, we consider three cases: the environment channel $\Observations_{:}\mid A_{:}$, the agent channel $\Actions_{:} \mid \Observations_{:}$, and the realised joint process $(\Actions, \Observations)_{:}$, equivalently viewed as the null-input channel $(\Actions, \Observations)_{:} \mid \mathbf 1_{:}$, i.e. a channel with trivial inputs ($\mathbf{1} = \{ \ast \}$).
These answer different predictive questions: how future observations depend on future actions, how future actions depend on future observations, and what future action-observation traces are generated by the coupled interaction itself.
They are not just different target variables for the same kind of model, as in e.g.~\citep{niBridgingStateHistory2024}.
They induce different causal equivalence relations, and therefore different notions of abstraction, model complexity, and use cases.

Distinguishing these channels and their corresponding models can be useful for different purposes.
For AI interpretability, understanding what RL agents effectively model, either by construction, or as a side effect of a particular architecture, loss function, or training method, is an open challenge~\citep{bereskaMechanisticInterpretabilityAI2024, dalrympleGuaranteedSafeAI2024}. 
This is particularly important for evaluating and supervising how RL agents form and use world models and beliefs of different kinds for decision making.
Making the channel explicit is especially important for interpretability.
Before asking whether the latent state learned by an AI model is a belief state, a predictive state, or an abstraction of some model, one must ask what it is a state \emph{of}: a model of the environment channel, the agent channel, or the realised joint process.
A latent state that is sufficient for predicting future observations under given actions need not be sufficient for predicting the agent's own future actions, and neither need coincide with a minimal state of the realised closed-loop interaction.

Various approaches, inspired by formal verification methods and logic~\citep{dalrympleGuaranteedSafeAI2024}, control theory~\citep{ullrichNewPerspectiveAI2025}, and neuroscience~\citep{mineaultNeuroAIAISafety2025}, among others, have been proposed to this end, taking advantage of different features and guarantees provided by each of these frameworks.
More recently, computational mechanics~\citep{crutchfieldCalculiEmergenceComputation1994, shaliziComputationalMechanicsPattern2001}, an approach with the goal of understanding the mechanics of computation and information processing, has provided a formal background and semantics to understand world models and abstractions for agents and machine learning models solving various tasks~\citep{shaiTransformersRepresentBelief2024, rosasAIVatFundamental2025, boydMonolithsModulesDecomposing2025, riechersNeuralNetworksLeverage2025, riechersNexttokenPretrainingImplies2025, piotrowskiConstrainedBeliefUpdates2025, shaiTransformersLearnFactored2026}.

In this work we extend the approach of~\citep{rosasAIVatFundamental2025} focused on the environment channel, and propose a systematic characterisation of canonical predictive models organised by the channel they model.
In~\cref{sec:canonicalWorldModels} we introduce canonical environment, agent, and joint models using $\epsilon$-machines and $\epsilon$-transducers, with more technical background in~\cref{sec:compMech}.
On the environment side, this recovers predictive state representations used to build models without relying on given latent variable representations, with predictive states as equivalence classes of histories are identified by the predictions they assign to future action-observation tests~\citep{littmanPredictiveRepresentationsState2001, singhPredictiveStateRepresentations2004}.
We apply the same criterion to the agent channel and to the realised joint process, giving canonical states for policy behaviour and coupled interaction as well as for environment dynamics.
In~\cref{sec:supportRestricted} we then show how closed-loop coupling induces canonical support-restricted environment and agent models, whose predictive equivalences range over continuations supported by the realised interaction.
This captures a precise interpretation of world models that represent only the possible interactions that an agent has with its environment.
These models are in general simpler because they don't include all the possible (counterfactual) ways an agent and environment could have interacted with under different circumstances.
The main structural result is that the non-sink canonical support-restricted environment states factor through the joint causal states, with transitions induced from the joint model, and the agent-side construction is dual.
This is not true in general for the case without support restriction, unless more assumptions are introduced.
In~\cref{sec:related} we review related work before a few final discussion points are brought out in~\cref{sec:conclusions}.

\paragraph{Contributions.}
Our contributions are three-fold.
First, we extend the viewpoint behind predictive state representations of environments to the other channels over the same interface of an agent-environment interaction, an agent channel and a joint process seen as a channel with trivial inputs, see~\cref{def:predictiveChannels}.
In this way, the (latent) states of a model are defined by different equivalence relations of action-observation histories based on their predictive content, rather than assumed to be given by a particular hidden-state presentation.
This gives an input-output analogue for policy behaviour and an autonomous predictive-state model for the realised joint process.
For reinforcement learning, this separates the information necessary for planning from that needed for modelling a policy or an on-policy rollout, by defining different different states depending on the task.
This formalises the informal idea that states sufficient for predicting future observations are in general not the same as those sufficient for predicting future actions, or future action-observation pairs.
For mechanistic interpretability, it gives instead reference targets for asking what kind of predictive state a learned representation implements.
This means, for instance, that we can extend existing work on beliefs about the environment of AI models such as transformers~\citep{shaiTransformersRepresentBelief2024, riechersNeuralNetworksLeverage2025, riechersNexttokenPretrainingImplies2025, piotrowskiConstrainedBeliefUpdates2025, shaiTransformersLearnFactored2026} to look for beliefs the AI model might have about itself or its interaction with the environment.
Second, we define canonical support-restricted environment and agent models induced by a fixed closed-loop coupling~\cref{def:canonicalSupportRestrictedEnvironmentModel,tab:environmentAgentAxes}.
This identifies when a model is being queried outside the support of the realised interaction, and distinguishes counterfactual world modelling from policy-conditioned prediction.
Third, we show that the support-restricted environment model is always determined by the joint model: the joint model determines its non-sink states and transition structure, with only the totalisation sink added~\cref{th:supportRestrictedEnvironmentFactors,th:transitionsEnvironmentInduced,th:jointReconstructsSupportRestrictedEnvironment}.
This shows why compact closed-loop prediction need not imply a compact unrestricted model.

\begin{figure}
    \centering
    \begin{subfigure}{0.3\textwidth}
        \includegraphics[width=\textwidth]{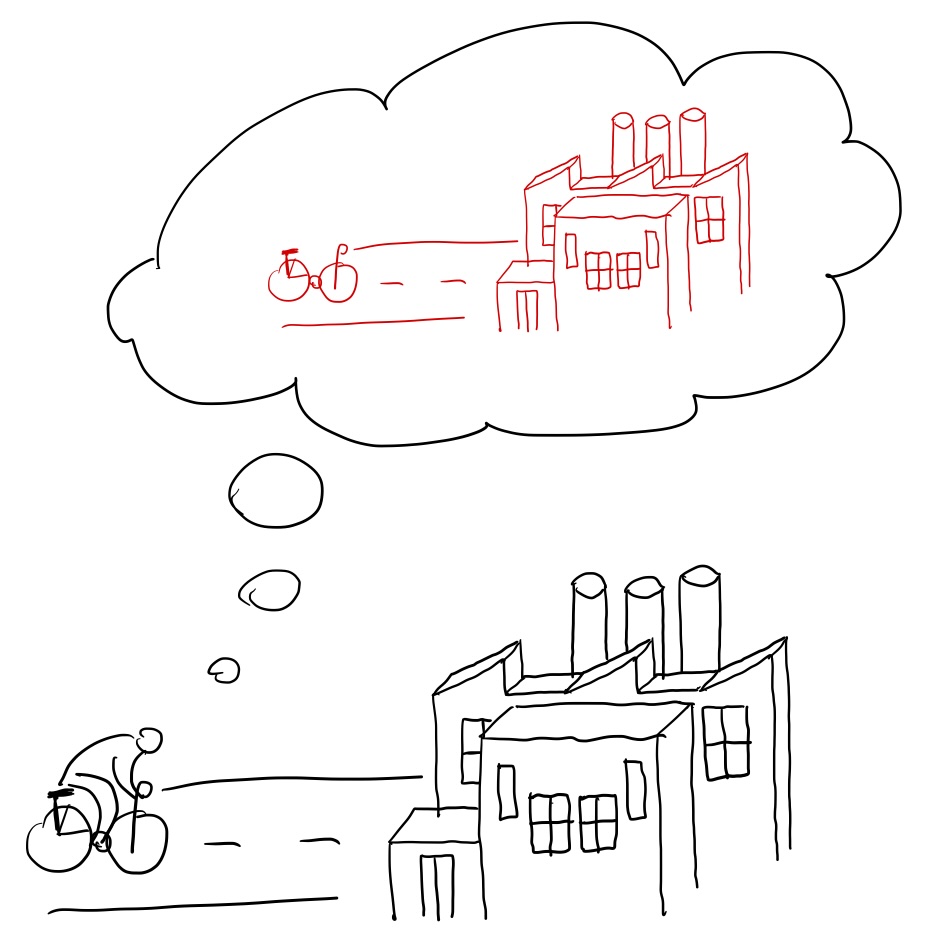}
        \caption{Environment model.}
        \label{fig:EnvModelPic}
    \end{subfigure}
    \hfill
    \begin{subfigure}{0.3\textwidth}
        \includegraphics[width=\textwidth]{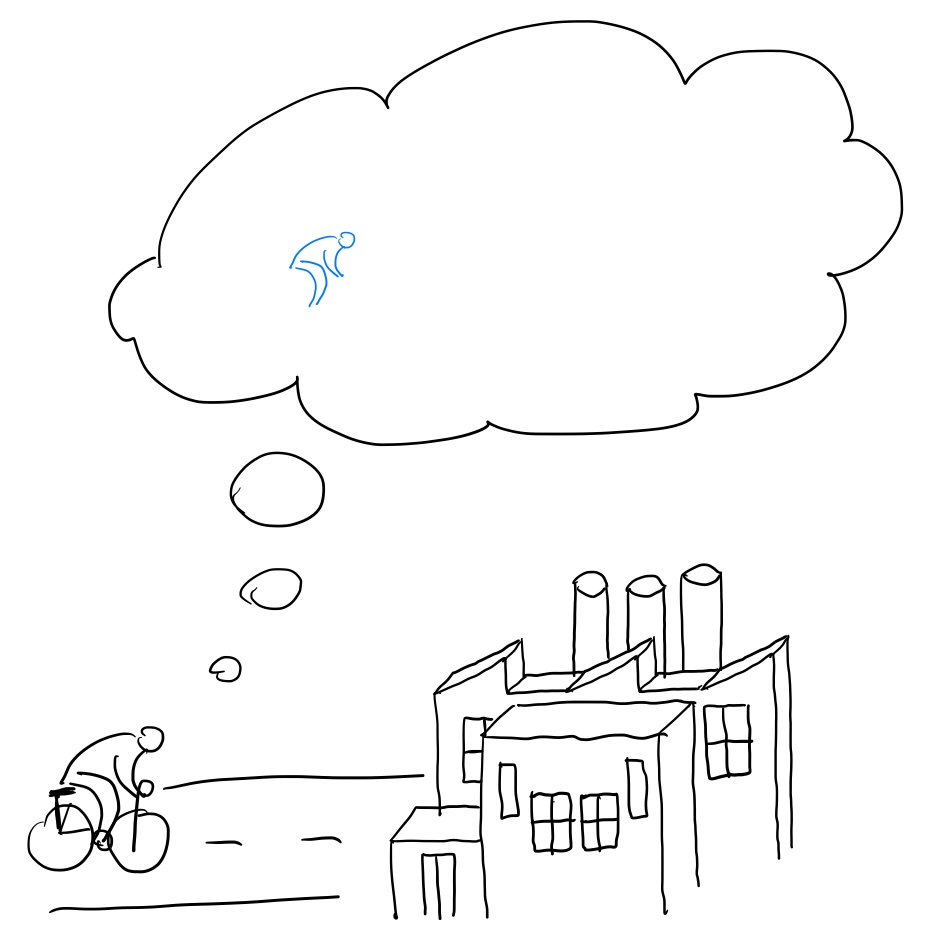}
        \caption{Agent model.}
        \label{fig:AgentModelPic}
    \end{subfigure}
    \hfill
    \begin{subfigure}{0.3\textwidth}
        \includegraphics[width=\textwidth]{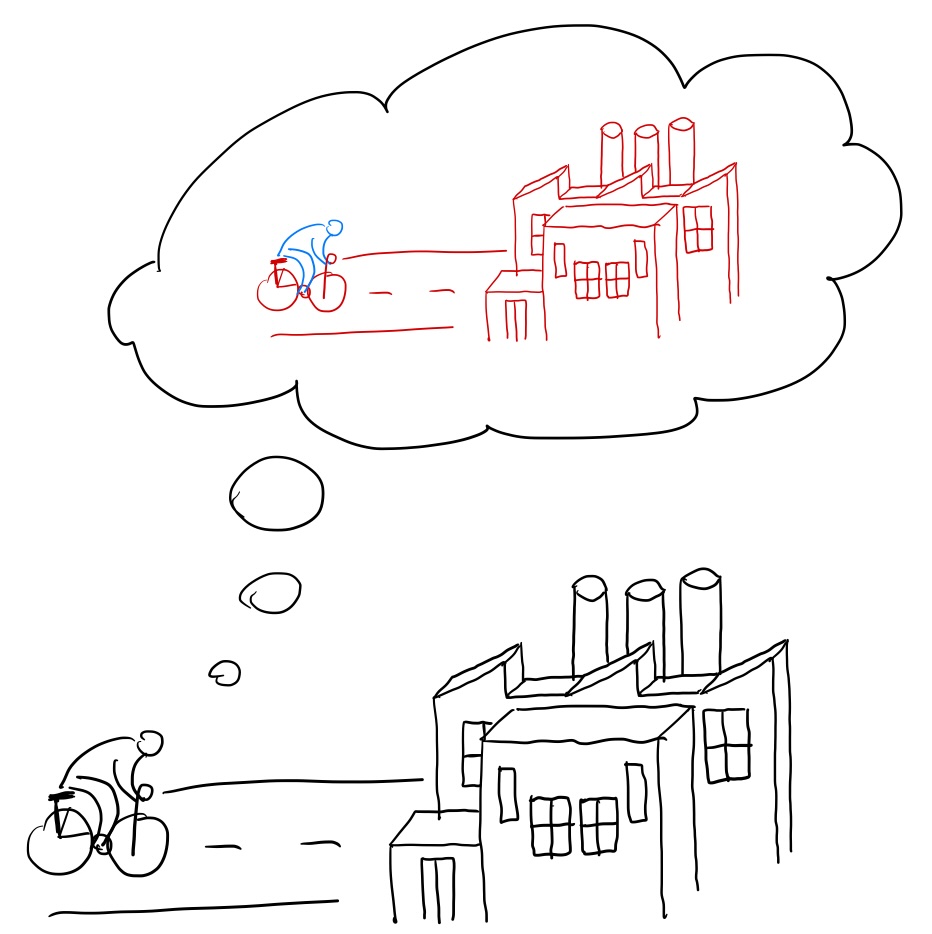}
        \caption{Joint model.}
        \label{fig:JointModelPic}
    \end{subfigure}
            
    \caption{Different kinds of world models, represented pictorially.
    The joint model represents the realised coupled action-observation process.}
    \label{fig:worldModelsPic}
\end{figure}

%============================================================
\section{Canonical models}
%============================================================
\label{sec:canonicalWorldModels}

This section introduces the mathematical setup used to characterize the interaction between an agent and its environment. 
Specifically, we formalise the predictive semantics of these interactions by defining canonical environment, agent, and joint models in terms of computational mechanics tools.

In the following, uppercase letters (e.g. $X,Y$) are used to denote random variables and lowercase (e.g. $x,y$) their realisations, calligraphic letters (e.g. $\mathcal{X}, \mathcal{Y}$) denote the sets over which they take values, and the symbol $\Delta$ (as in $\Delta(\mathcal{X}), \Delta(\mathcal{Y})$) is used to denote the collection of all distributions over those sets. 
We use the shorthand notation $p(x \mid y) = \Pr(X=x \mid Y=y)$ to express probabilities when there is no risk of ambiguity, and assume that equalities of the form $p(x \mid y,z) = p(x \mid y)$ hold for all realisations that can take place with non-zero probability. 
$\N = \{0,1,2, \ldots\}$ corresponds to zero-based numbering, and we use the following abbreviations: $\bm x_{t:t+L} = (x_{t} \dots, x_{t:t+L-1})$, $\bm x_{:t} = \bm x_{-\infty:t}$, $\bm x_{t:} = \bm x_{t:\infty}$, and $\bm x_: = \bm x_{-\infty:\infty}$. 
Note that the left index is always inclusive, the right always exclusive.
We will often treat an autonomous stochastic process $X_{:}$ as a channel with a trivial input alphabet $\mathbf{1} = \{ \ast \}$, identifying $X_{:}$ with the null-input channel $X_{:}\mid \mathbf{1}_{:}$ when convenient.

We initially consider how agents and environments interact over a given interface with observation symbols $\Observations_{t} \in \OBSERVATIONS$ and action symbols $\Actions_{t} \in \ACTIONS$. 
\begin{definition}[Agent-environment interface~\citep{myersCategoricalSystemsTheory2021, abelPlasticityMirrorEmpowerment2025}]
    The interface is a pair of finite sets, $(\ACTIONS, \OBSERVATIONS)$ such that $|\ACTIONS| \geq 2, |\OBSERVATIONS| \geq 2$.
\end{definition}

The predictive objects considered in this work can be described using channels: two channels as in~\citep{fidererWorkCapacityChannels2025}, together with the realised joint process viewed as a null-input channel.

\begin{definition}[Predictive channels over an agent-environment interface]
    \label{def:predictiveChannels}
    For an interaction over the interface $(\ACTIONS, \OBSERVATIONS)$, we consider three associated channels, the \emph{environment channel} $\Observations_{:} \mid \Actions_{:}$ which predicts observations from actions, the \emph{agent channel} $\Actions_{:}\mid\Observations_{:}$ which predicts actions from observations, and the realised joint process $(\Actions, \Observations)_{:}$, written as the null-input channel $(\Actions, \Observations)_{:}\mid\mathbf 1_{:}$.
\end{definition}

\begin{table}[ht!]
    \centering
    \begin{tabular}{clccl}
        \toprule
        Channel & Description & Input process & Output process & Goal \\
        \midrule
        $\Observations_{:} \mid \Actions_{:}$ & environment & $\Actions_{:}$ & $\Observations_{:}$ & predict observations from actions \\
        $\Actions_{:} \mid \Observations_{:}$ & agent & $\Observations_{:}$ & $\Actions_{:}$ & predict actions from observations \\
        $(\Actions, \Observations)_{:} \mid \mathbf{1}_{:}$ & joint & $\mathbf{1}_{:}$ & $(\Actions, \Observations)_{:}$ & predict realised coupled traces \\
        \bottomrule
    \end{tabular}
    \caption{Channels studied in this paper. 
    The channel, rather than only the predicted variables, determines what kind of world model is being defined.}
\end{table}

Next, we define the corresponding canonical models in terms of causal states for these three channels.
These should be seen as optimal, in the computational mechanics sense of unique, unifilar, minimal and predictive, world models for different channels, extending intuitions and classifications applied to environment models (\cref{fig:EnvModelPic}), see e.g.~\citep{niBridgingStateHistory2024}, to agent (\cref{fig:AgentModelPic}) and joint (\cref{fig:JointModelPic}) models.

\subsection{Canonical environment model}
\label{sec:canonicalEnvironment}

We now define the environment's channel causal states, $\STATES^\Environment$, see~\cref{sec:compMech}, with an equivalence relation $\sim_{\Environment}$ of histories of actions and observations.
For simplicity, we write $\HistoriesFiltering_{:t} = (\Actions, \Observations)_{:t}$ for action-observation histories, with realisations $\historiesfiltering_{:t}$ and history space $\HISTORIESFILTERING_{:t}$.
The equivalence relation is:
\begin{align}
    \label{eq:envCausalStates}
    \historiesfiltering_{:t} \sim_{\Environment} \historiesfiltering'_{:t} \iff \Pr(\Observations_{t:} \mid \Actions_{t:}, \HistoriesFiltering_{:t} = \historiesfiltering_{:t}) = \Pr(\Observations_{t:} \mid \Actions_{t:}, \HistoriesFiltering_{:t} = \historiesfiltering'_{:t})
\end{align}
and it induces a surjective map $\epsilon_\Environment : \HistoriesFiltering_{:t} \to \STATES^\Environment$ given by $\epsilon_\Environment(\historiesfiltering_{:t}) = \states^\Environment = \{ \historiesfiltering'_{:t} \mid \historiesfiltering_{:t} \sim_{\Environment} \historiesfiltering'_{:t} \}$.
These causal states are minimal sufficient statistics for prediction of future observations given future actions and past actions and observations, see~\cref{sec:epsTransducers}.
Notice how conditioning on $\Actions_{t:}$ in the environment channel is part of the predictive equivalence quantifier: it enforces equality of predictive behaviour for all possible future input continuations. 
This is particularly relevant when the environment is later coupled to an agent: the coupling may support only a subset of possible future input continuations, leading to a different partition of histories, see~\cref{sec:supportRestricted}.

The $\epsilon_\Environment$-map induces dynamics over causal states captured by stochastic matrices $\mathcal{T}^\Environment = \{ T^{(\observations \mid \actions)} \}_{\observations \in \OBSERVATIONS, \actions \in \ACTIONS}$ given by:
\begin{align}
    \label{eq:envTransitions}
    T^{(\observations \mid \actions)}_{\states^\Environment \to \bar{\states}^\Environment} = \Pr(\States^\Environment_{t+1} = \bar{\states}^\Environment, \Observations_{t} = \observations \mid \States^\Environment_{t} = \states^\Environment, \Actions_{t} = \actions).
\end{align}

\begin{definition}[Canonical environment model]
    A canonical environment model is the $\epsilon$-transducer $(\ACTIONS, \OBSERVATIONS, \STATES^\Environment, \mathcal{T}^\Environment)$ of the environment channel $\Observations_: \mid \Actions_:$.
\end{definition}

\begin{remark}[Relation to predictive state representations]
    The canonical environment model can be read as the all-tests, presentation-independent semantic for idealised predictive state representations.
    This relation has been noted in, among others, \cite{shaliziMethodsTechniquesComplex2006, zhangLearningCausalState2021, rosasAIVatFundamental2025}.
    In the predictive state representation literature, a test is a finite future action-observation experiment: future actions are supplied as inputs, and the model predicts the probability of the corresponding future observations~\citep{singhPredictiveStateRepresentations2004}.
    A predictive state representation represents a history by the predictions it assigns to such tests, here the causal state is the equivalence class of histories that agree on all future tests.
    Finite-dimensional predictive state representations typically choose a finite set of core tests or a particular presentation, whereas the $\epsilon$-transducer gives the optimal minimal unifilar presentation of the environment channel.
\end{remark}

\subsection{Canonical agent model}
\label{sec:canonicalAgent}

Consider an agent's causal states, $\STATES^\Agent$, defined by the equivalence relation $\sim_{\Agent}$:
\begin{align}
    \label{eq:agentCausalStates}
    \historiesfiltering_{:t} \sim_{\Agent} \historiesfiltering'_{:t} \iff \Pr(\Actions_{t+1:} \mid \Observations_{t:}, \HistoriesFiltering_{:t} = \historiesfiltering_{:t}) = \Pr(\Actions_{t+1:} \mid \Observations_{t:}, \HistoriesFiltering_{:t} = \historiesfiltering'_{:t})
\end{align}
inducing a surjective map $\epsilon_\Agent : \HistoriesFiltering_{:t} \to \STATES^\Agent$ given by $\epsilon_\Agent(\historiesfiltering_{:t}) = \states^\Agent = \{ \historiesfiltering'_{:t} \mid \historiesfiltering_{:t} \sim_{\Agent} \historiesfiltering'_{:t} \}$.
These causal states are minimal sufficient statistics for prediction of future actions given future observations and past actions and observations, see~\cref{sec:epsTransducers}.
The shift from $\Observations_{t}$ to $\Actions_{t+1}$ reflects the convention that the current observation is treated as the input for the agent's next action.
The $\epsilon_\Agent$-map induces dynamics over causal states captured by stochastic matrices $\mathcal{T}^\Agent = \{ T^{(\actions \mid \observations)} \}_{\actions \in \ACTIONS, \observations \in \OBSERVATIONS}$ given by:
\begin{align}
    \label{eq:agentTransitions}
    T^{(\actions \mid \observations)}_{\states^\Agent \to \bar{\states}^\Agent} = \Pr(\States^\Agent_{t+1} = \bar{\states}^\Agent, \Actions_{t+1} = \actions \mid \States^\Agent_{t} = \states^\Agent, \Observations_{t} = \observations).
\end{align}
\begin{definition}[Canonical agent model]
    We define a canonical agent model to be the $\epsilon$-transducer $(\OBSERVATIONS, \ACTIONS, \STATES^\Agent, \mathcal{T}^\Agent)$ of an agent channel $\Actions_: \mid \Observations_:$.
\end{definition}

The canonical agent model is obtained by applying the same predictive-state construction to the agent channel $\Actions_{:} \mid \Observations_{:}$ that predictive state representations apply to the environment channel $\Observations_{:} \mid \Actions_{:}$.

\begin{remark}[Action-predictive state representations]
    For the environment channel, a finite test fixes a future action word and asks for the probability of a future observation word.
    For the agent channel, the corresponding agent-side test fixes a future observation word and asks for the probability of a future action word.
    Concretely, for finite words $\observations_{t:t+L}$ and $\actions_{t+1:t+L+1}$,
    \begin{align}
        \Pr(
            \Actions_{t+1:t+L+1}=\actions_{t+1:t+L+1}
            \mid
            \Observations_{t:t+L}=\observations_{t:t+L},
            \HistoriesFiltering_{:t}=\historiesfiltering_{:t}
        ).
    \end{align}
    Thus two histories are agent-causally equivalent when they give the same predictions for all such future observation-action tests.
    In this sense, the canonical agent model is a predictive-state model of policy or controller behaviour: it represents histories by how the agent would act under all possible future observation continuations.
\end{remark}

As in the case of the environment channel, conditioning on $\Observations_{t:}$ is part of the predictive equivalence quantifier: it compares predictive behaviour across all possible future input (observation) continuations. 
This does not mean that the agent observes the future.
In~\cref{sec:supportRestricted}, we instead consider support-restricted variants, where only observation continuations supported by the coupled environment are considered.
If one further weights or selects observation continuations by desirability, related constructions connect to control-as-inference~\citep{kappenOptimalControlGraphical2012}, planning-as-inference~\citep{levineReinforcementLearningControl2018}, and active-inference perspectives~\citep{baltieriKalmanBucyFiltersLinear2020, millidgeRelationshipActiveInference2020}.

\subsection{Canonical joint model}
\label{sec:canonicalJointModel}
Finally, we consider joint agent-environment models.
This is inspired by the idea of embedded agency~\citep{demskiEmbeddedAgency2020}, where agents model themselves as part of a world that includes them and their environment.
Following the convention introduced above, the joint process can also be viewed as a null-input channel, $(\Actions, \Observations)_{:} \mid \mathbf{1}_{:}$.
Unlike the environment and agent channels, however, it has no non-trivial exogenous input: it describes the realised coupled action-observation autonomous process.
We start defining an equivalence relation $\sim_{\Joint}$ of the joint pasts given by:
\begin{align}
    \label{eq:jointCausalStates}
    \historiesfiltering_{:t} \sim_{\Joint} \historiesfiltering'_{:t} \iff \Pr((\Actions, \Observations)_{t:} \mid \HistoriesFiltering_{:t} = \historiesfiltering_{:t}) = \Pr((\Actions, \Observations)_{t:} \mid \HistoriesFiltering_{:t} = \historiesfiltering'_{:t})
\end{align}
which gives a surjective map $\epsilon_\Joint : (\ACTIONS \times \OBSERVATIONS)_{:t} \to \STATES^\Joint$ defined by $\epsilon_\Joint(\historiesfiltering_{:t}) = \states^\Joint = \{ \historiesfiltering'_{:t} \mid \historiesfiltering_{:t} \sim_{\Joint} \historiesfiltering'_{:t} \}$.
The resulting states $\STATES^\Joint$ are the causal states of the joint $\epsilon$-machine~\citep{barnettComputationalMechanicsInput2015}.
Transitions between these causal states are given by stochastic matrices $\mathcal{T}^\Joint = \{ T^{(\actions, \observations)} \}_{(\actions, \observations) \in \ACTIONS \times \OBSERVATIONS}$:
\begin{align}
    \label{eq:jointTransitions}
    T^{(\actions, \observations)}_{\states^\Joint \to \bar{\states}^\Joint} = \Pr(\States^\Joint_{t+1} = \bar{\states}^\Joint, (\Actions, \Observations)_{t} = (\actions, \observations) \mid \States^\Joint_{t} = \states^\Joint).
\end{align}
\begin{definition}[Canonical joint model]
    \label{def:canonicalJointModel}
    A canonical joint model of the joint process $(\Actions, \Observations)_{:}$ is the unique minimal unifilar machine presentation given by the tuple $(\ACTIONS \times \OBSERVATIONS, \STATES^\Joint, \mathcal{T}^\Joint)$.
\end{definition}

\begin{remark}[Joint predictive states and policy-induced processes]
    Under the null-input-channel convention, the joint model is the autonomous counterpart of the environment and agent channel models.
    Its causal states represent histories by their predictions of future action-observation traces, rather than by predictions of future observations under action inputs or future actions under observation inputs.
    In a fully observable MDP with a fixed policy, this reduces to the familiar policy-induced Markov chain, or Markov reward process if rewards are included~\citep{suttonReinforcementLearningIntroduction2018, putermanMarkovDecisionProcesses2014}.
    More generally, coupling a partially observable environment to a policy or finite-state controller induces an autonomous stochastic process over action-observation traces~\citep{meuleauLearningFinitestateControllers1999, koulLearningFiniteState2018}.
    The joint model is the canonical minimal predictive presentation of this realised process.
\end{remark}

\subsection{Relations among the three canonical models}
\label{sec:canonicalRelations}

Since the environment, agent, and joint models are all built from the same action-observation histories, one might expect their causal states to coincide, or at least to be related by a simple construction.
This is not true in general.
The reason is that the three models are minimal for different prediction problems: the environment model predicts future observations under future action inputs, the agent model predicts future actions under future observation inputs, while the joint model predicts future action-observation traces of the realised coupled process.
These are different equivalence relations on histories, cf.~\cref{eq:envCausalStates,eq:agentCausalStates,eq:jointCausalStates}, and therefore need not induce the same partition.
We introduce a running example to illustrate this point.

The example is deliberately minimal, but it isolates a common pattern in reinforcement learning.
In partially observable environments, arbitrary (counterfactual) action continuations can require tracking a rich belief state, while a particular policy, controller, action mask, option structure, or dataset support may realise only a much smaller language of continuations.
Thus a model of the environment under arbitrary action inputs can have a different predictive structure from a model of the action-observation traces generated by a fixed coupling.
This means that a compact representation learned from rollouts need not be a representation of the full counterfactual environment dynamics; it may instead encode the realised policy-environment process.
The binary construction below is a small illustration of this phenomenon.
\begin{example}
    \label{ex:Example}
    Consider a binary environment with latent state $E_{t}\in \{0,1\}$, binary actions, and binary observations.
    Action $0$ resets the next latent state to a fair coin, while action $1$ holds the latent state fixed.
    The observation is a noisy readout of the new latent state, with noise parameter $\eta\in(0,1/2)$.
    For the unrestricted environment channel $\Observations_{:}\mid\Actions_{:}$, arbitrary future action continuations must be considered.
    In particular, arbitrarily long runs of action $1$ generate infinitely many distinct posterior beliefs over $E_{t}$.
    These posterior beliefs give different predictions for future observations, and hence correspond to infinitely many distinct environment causal states.
    Now couple this environment to a deterministic finite-state controller with memory states $\{\alpha, \beta, \gamma\}$.
    The controller emits action $0$ in state $\alpha$, and action $1$ in states $\beta$ and $\gamma$.
    Its memory update is such that, after a reset, it performs one hold action and sometimes one further hold action before returning to reset.
    As an agent channel $\Actions_{:}\mid\Observations_{:}$, this controller has only finitely many causal states: its finitely many memory modes are predictively sufficient for future action prediction.
    The realised joint process is also finite-state.
    Once the controller is coupled to the environment, arbitrarily long hold continuations are no longer realised.
    Only finitely many posterior configurations appear in the coupled action-observation process, and the joint $\epsilon$-machine has finitely many causal states.
    Thus, in this example, the unrestricted environment model is infinite, the agent model is finite, and the joint model is finite.
\end{example}

In this example, we can see that the joint model cannot easily obtained simply by identifying the causal states of the environment and agent models, because these are sufficient statistics of histories for different future predictions, nor by taking an obvious product, sum, or monotone combination of their state spaces.
A product bound can, for instance, be recovered after adding extra structure, see~\cref{sec:extraStructure}.
In general however, the separately defined environment and agent causal states need not retain the correlation, timing, or support information required for the pair $(\States^\Environment_{t}, \States^\Agent_{t})$ to be sufficient for predicting the realised joint process.
Moreover, in examples such as the one above, the bound is not informative because the unrestricted environment model is already infinite.

The next section shows that a more robust relation appears after support restriction.
While the unrestricted models are not related in any obvious way, the canonical support-restricted environment and agent models induced by the realised coupling do factor through the joint model.

\section{Canonical support-restricted models from closed-loop coupling}
\label{sec:supportRestricted}
The canonical environment and agent models of~\cref{sec:canonicalWorldModels} are \emph{unrestricted} channel models: their causal equivalences compare predictive behaviour under arbitrary future input continuations.
In a realised closed-loop agent-environment interaction, however, not all such continuations are supported~\citep{heldMovementproducedStimulationDevelopment1963, buckleyTheoryHowActive2018}.
An agent may be limited by its policy class, controller, embodiment, action mask, or choices, see~\cref{fig:closedLoop}.
Dually, an environment restricts which future observation continuations are available to the agent.
We call the resulting models \emph{canonical support-restricted models}.
We first give the environment-side construction.
The agent-side construction is obtained dually by exchanging actions and observations.

\begin{figure}[ht!]
    \centering
    \begin{subfigure}{0.48\textwidth}
        \includegraphics[width=\textwidth]{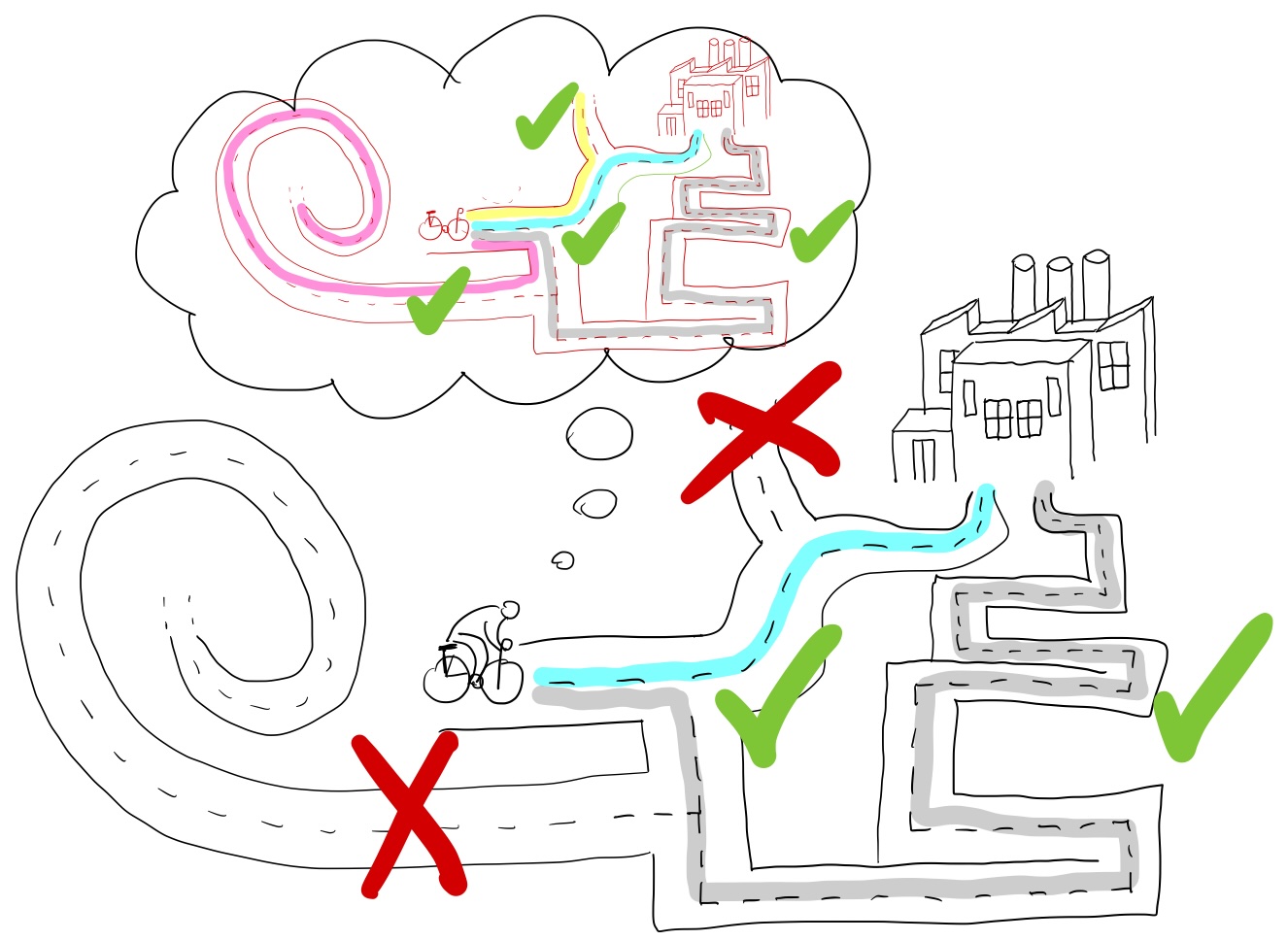}
        \caption{Canonical environment model.}
        \label{fig:unrestricted}
    \end{subfigure}
    % \hfill
    \begin{subfigure}{0.48\textwidth}
        \includegraphics[width=\textwidth]{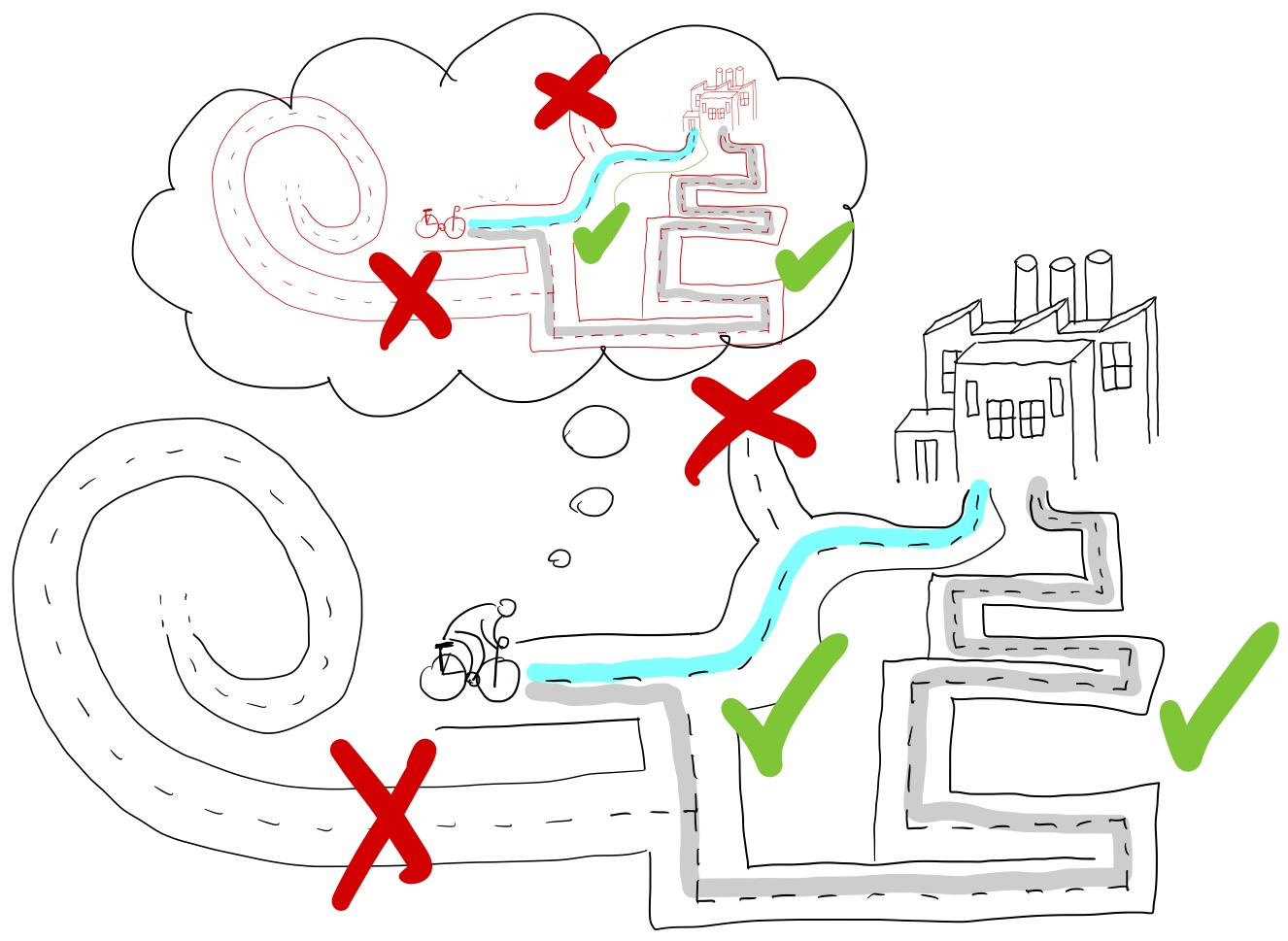}
        \caption{Canonical support-restricted environment model.}
        \label{fig:restricted}
    \end{subfigure}
    \caption{Unrestricted and support-restricted prediction.
    The unrestricted model considers arbitrary future continuations through the interface, including counterfactual paths not generated by the coupled agent, i.e. for instance paths on a map that an agent doesn't consider or never takes.
    The canonical support-restricted model keeps only continuations supported by the realised coupling: unsupported queries are marked as failures and sent to the totalisation sink $\bot$.}
    \label{fig:closedLoop}
\end{figure}

The natural support-restricted environment object is initially partial.
For an ordinary history, the realised coupling determines which action queries are supported.
On supported action queries, the environment-side prediction is the ordinary conditional observation law induced by the joint process.
On unsupported action queries, no ordinary environment prediction is defined by the realised coupling.
We totalise~\citep{silvaGeneralizingDeterminizationAutomata2013, jacobsIntroductionCoalgebraMathematics2017} this partial channel by adjoining a symbol $\bot$, which is emitted by the totalised channel when an action query is unsupported.
Intuitively, an action can be unsupported because the coupled agent cannot execute it, its controller or policy assigns it zero probability, or it is excluded by an action mask or other constraint.

Let $\widetilde{\OBSERVATIONS} \coloneqq \OBSERVATIONS \cup \{\bot\}$ be the augmented observation alphabet, and let the set of extended histories be $\widetilde{\HISTORIESFILTERING}_{:t} \coloneqq (\ACTIONS \times \widetilde{\OBSERVATIONS})_{:t}$.
Call $\widetilde \historiesfiltering_{:t}\in \widetilde{\HISTORIESFILTERING}_{:t}$ \emph{ordinary} if it contains no
occurrence of $\bot$.
In that case we identify it with the unique ordinary history $\historiesfiltering_{:t} \in \HISTORIESFILTERING_{:t}$ having the same symbols.
To make the domain of this partial channel explicit, we first define which finite future action continuations are supported by the realised joint process at a given ordinary history.
This finite-horizon definition describes the support language of the coupling.
The one-step case is then used to define the transition kernel below, because the support-restricted channel is specified one queried action at a time.

\begin{definition}[Realisable action continuations]
    \label{def:realisableActions}
    For an ordinary history $\historiesfiltering_{:t} \in \HISTORIESFILTERING_{:t}$ and $k\in \mathbb N^+$, the set of $k$-step realisable action continuations is
    \begin{align}
        \ACTIONS_k(\historiesfiltering_{:t})
        \coloneqq
        \left\{
            \actions_{t:t+k}\in \ACTIONS^k
            \;\middle|\;
            \exists \observations_{t:t+k}\in \OBSERVATIONS^k:
            \Pr\bigl(
                (\Actions, \Observations)_{t:t+k}
                =
                (\actions, \observations)_{t:t+k}
                \mid
                \HistoriesFiltering_{:t}=\historiesfiltering_{:t}
            \bigr) > 0
        \right\}.
    \end{align}
    An infinite future action continuation is realisable, in the cylinder-support sense, when all of its finite prefixes are realisable:
    $
        \actions_{t:} \in \ACTIONS_\infty(\historiesfiltering_{:t})
        \iff
        \actions_{t:t+k} \in \ACTIONS_k(\historiesfiltering_{:t})
        \quad\text{for all } k \in \N^+.
    $
    The one-step realisable action set, used in the transition kernel below, is the case $k=1$:
    \begin{align}
        \ACTIONS_1(\historiesfiltering_{:t})
        =
        \left\{
            \actions \in \ACTIONS
            \;\middle|\;
            \exists \observations \in \OBSERVATIONS:
            \Pr\bigl(
                (\Actions, \Observations)_{t} = (\actions, \observations)
                \mid
                \HistoriesFiltering_{:t} = \historiesfiltering_{:t}
            \bigr) > 0
        \right\}.
    \end{align}
    Equivalently, with the marginal
    $
        q(\actions \mid \historiesfiltering_{:t})
        \coloneqq
        \sum_{\observations \in \OBSERVATIONS}
        \Pr\bigl(
            (\Actions, \Observations)_{t} = (\actions, \observations)
            \mid
            \HistoriesFiltering_{:t} = \historiesfiltering_{:t}
        \bigr),
    $
    we have
    $
        \actions \in \ACTIONS_1(\historiesfiltering_{:t})
        \iff
        q(\actions \mid \historiesfiltering_{:t}) > 0.
    $
\end{definition}
The finite-horizon sets $\ACTIONS_k(\historiesfiltering_{:t})$ describe the future action support induced by the realised coupling.
The one-step set $\ACTIONS_1(\historiesfiltering_{:t})$ is enough for the recursive transducer definition: after a supported action and an ordinary observation, support is recomputed at the next step.
If a queried future action sequence first leaves this support, the totalised channel emits $\bot$ at that step and then remains in the sink.
For $k = 1$, this recovers the usual on-policy action support when the realised joint process is generated by a fixed policy coupled to an environment.
\begin{example}[On-policy one-step actions]
    \label{ex:onPolicyRealisableActions}
    Suppose the realised joint process is generated by coupling an environment with a policy
    $
        \pi(\actions\mid\historiesfiltering_{:t})
        =
        \Pr(\Actions_{t}=\actions\mid\HistoriesFiltering_{:t}=\historiesfiltering_{:t}).
    $
    If the one-step joint law factors as
    $
        \Pr\bigl((\Actions, \Observations)_{t}=(\actions, \observations)
        \mid
        \HistoriesFiltering_{:t}=\historiesfiltering_{:t}\bigr)
        =
        \pi(\actions\mid\historiesfiltering_{:t})
        \Pr\bigl(\Observations_{t}=\observations
        \mid
        \HistoriesFiltering_{:t}=\historiesfiltering_{:t},
        \Actions_{t}=\actions\bigr),
    $
    then
    \begin{align}
        \ACTIONS_1(\historiesfiltering_{:t})
        =
        \left\{
            \actions \in \ACTIONS
            \;\middle|\;
            \pi(\actions\mid\historiesfiltering_{:t}) > 0
            \text{ and }
            \exists\observations \in \OBSERVATIONS:
            \Pr(\Observations_{t} = \observations
            \mid
            \HistoriesFiltering_{:t} = \historiesfiltering_{:t},
            \Actions_{t} = \actions) > 0
        \right\}.
    \end{align}
    In the usual case where the environment produces some observation with nonzero probability after every action in the support of $\pi$, this reduces to
    \begin{align}
        \ACTIONS_1(\historiesfiltering_{:t})
        =
        \supp \pi(\cdot\mid\historiesfiltering_{:t}).
    \end{align}
    Thus, for a deterministic policy, the one-step realisable action set is the singleton containing the on-policy action:
    $
        \ACTIONS_1(\historiesfiltering_{:t})
        =
        \{\pi(\historiesfiltering_{:t})\}.
    $
\end{example}

\begin{definition}[Support-restricted environment channel]
    \label{def:supportRestrictedEnv}
    The support-restricted environment channel $\widetilde \Observations_: \mid \Actions_:$ is defined on extended histories
    $\widetilde \historiesfiltering_{:t} \in \widetilde{\HISTORIESFILTERING}_{:t}$ as follows:
    \begin{enumerate}
        \item if $\widetilde \historiesfiltering_{:t}$ contains an occurrence of $\bot$, then for every
        $\actions \in \ACTIONS, \Pr \bigl(\widetilde \Observations_{t} = \bot \mid \Actions_{t} = \actions, \widetilde \HistoriesFiltering_{:t} = \widetilde \historiesfiltering_{:t} \bigr) = 1$.
        \item if $\widetilde \historiesfiltering_{:t}$ is ordinary, identified with $\historiesfiltering_{:t} \in \HISTORIESFILTERING_{:t}$, then for every $\actions \in \ACTIONS$ and $\widetilde{\observations} \in \widetilde{\OBSERVATIONS}$,
        \begin{align*}
            & \Pr\bigl(\widetilde{\Observations}_{t} = \widetilde{\observations} \mid \Actions_{t} = \actions, \widetilde{\HistoriesFiltering}_{:t} = \historiesfiltering_{:t}\bigr) \nonumber \\
            \coloneqq
            &
            \begin{cases}
                \dfrac{\Pr\bigl((\Actions, \Observations)_{t}=(\actions, \observations) \mid \HistoriesFiltering_{:t} = \historiesfiltering_{:t} \bigr)}{\sum_{\bar{\observations} \in \OBSERVATIONS}\Pr\bigl((\Actions, \Observations)_{t} =(\actions, \bar{\observations}) \mid \HistoriesFiltering_{:t} = \historiesfiltering_{:t}\bigr)},
                & \text{if } \widetilde{\observations} = \observations \in \OBSERVATIONS\text{ and } \actions \in \ACTIONS_1(\historiesfiltering_{:t}), \\[2.0ex]
                1,
                & \text{if } \widetilde{\observations} = \bot\text{ and } \actions \notin \ACTIONS_1(\historiesfiltering_{:t}), \\
                0,
                & \text{otherwise.}
            \end{cases}
        \end{align*}
    \end{enumerate}
\end{definition}

\begin{proposition}[Totalisation of the partial channel]
    The channel of~\cref{def:supportRestrictedEnv} is the totalisation of the partial support-restricted environment channel induced by the realised joint process.
    On realisable actions it agrees with the ordinary conditional observation law induced by the joint process.
    On unrealisable actions it emits $\bot$, and after $\bot$ has occurred all future outputs are $\bot$.
\end{proposition}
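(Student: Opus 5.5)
The statement mostly unpacks \cref{def:supportRestrictedEnv}, so I would prove it by checking, in order, the three properties a totalisation must have: the kernel is a well-defined total stochastic kernel; it restricts to the partial channel on its domain; and it sends everything outside the domain to $\bot$, which is absorbing. First I would make the partial channel explicit. For an ordinary history $\historiesfiltering_{:t}$, its domain is $\ACTIONS_1(\historiesfiltering_{:t})$, and on that domain its value is the conditional law $\Pr(\Observations_{t}=\cdot \mid \Actions_{t}=\actions, \HistoriesFiltering_{:t}=\historiesfiltering_{:t})$ induced by the joint process. Following \citep{silvaGeneralizingDeterminizationAutomata2013, jacobsIntroductionCoalgebraMathematics2017}, the totalisation of a partial kernel $X \rightharpoonup \Delta(\OBSERVATIONS)$ is the total kernel $X \to \Delta(\OBSERVATIONS \cup \{\bot\})$ that agrees with it on the domain and puts point mass on $\bot$ off the domain. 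The proof therefore reduces to a case comparison against this description.

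\emph{Well-definedness and agreement on realisable actions.} Fix an ordinary history and an action $\actions \in \ACTIONS_1(\historiesfiltering_{:t})$. By \cref{def:realisableActions}, the normalising denominator equals the marginal $q(\actions \mid \historiesfiltering_{:t})$, which is strictly positive, so the first case of the definition is well defined. The ratio is then exactly $\Pr((\Actions,\Observations)_{t}=(\actions,\observations)\mid \historiesfiltering_{:t})/\Pr(\Actions_{t}=\actions \mid \historiesfiltering_{:t})$, which is the ordinary conditional observation law by Bayes' rule. Its values sum to $1$ over $\OBSERVATIONS$, and the $\bot$ entry is $0$ by the ``otherwise'' clause. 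This gives a probability distribution on $\widetilde{\OBSERVATIONS}$ that agrees with the partial channel.

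\emph{Unrealisable actions and absorption.} If $\actions \notin \ACTIONS_1(\historiesfiltering_{:t})$, the definition assigns mass $1$ to $\bot$ and $0$ to every ordinary observation, so again it is a distribution. In this case $q(\actions\mid\historiesfiltering_{:t})=0$, so no ordinary conditional law is determined, and this is exactly the off-domain point mass required by totalisation. For absorption, let $\widetilde\historiesfiltering_{:t}$ be an extended history containing $\bot$ at some step $s<t$. Any continuation $\widetilde\historiesfiltering_{:t'}$ with $t'>t$ has $\widetilde\historiesfiltering_{:t}$ as a prefix and so still contains $\bot$. Clause~1 then forces $\widetilde\Observations_{t'}=\bot$ almost surely for every queried action. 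By induction on $t'$, all future outputs are $\bot$, whatever the action continuation.

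\emph{Main obstacle.} The delicate point is the case split. The two clauses have to partition $\widetilde{\HISTORIESFILTERING}_{:t}\times\ACTIONS$, and the identification of ordinary extended histories with ordinary histories has to be well defined, meaning each ordinary extended history corresponds to exactly one ordinary history. Beyond that, the argument needs the extended kernel to be consistent with the realised law on ordinary histories of positive probability: the conditional probabilities in the definition are only defined on positive-probability histories, following the convention stated in \cref{sec:canonicalWorldModels}. I would state this convention explicitly and note that it is the standard measure-theoretic caveat, not a substantive restriction. The remaining steps are routine checks.
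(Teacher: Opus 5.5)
Your proposal is correct and follows essentially the same route as the paper, whose proof is the one line ``this follows directly from the two clauses of \cref{def:supportRestrictedEnv}.'' Your case checks unpack that line explicitly: the normaliser is positive on $\ACTIONS_1(\historiesfiltering_{:t})$, there is point mass on $\bot$ off that set, and clause~1 makes $\bot$ absorbing by induction.
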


\begin{proof}
    This follows directly from the two clauses of~\cref{def:supportRestrictedEnv}.
\end{proof}

We now define the canonical predictive states of this channel in the standard way.
For extended histories $\widetilde \historiesfiltering_{:t}, \widetilde \historiesfiltering'_{:t}\in \widetilde{\HISTORIESFILTERING}_{:t}$, write
\begin{align}
    \widetilde \historiesfiltering_{:t}\sim_{\epsilon_{\mathrm{sr}-\Environment}}\widetilde \historiesfiltering'_{:t}
    \iff
    \Pr\bigl(\widetilde \Observations_{t:}\mid \Actions_{t:}, \widetilde \HistoriesFiltering_{:t}=\widetilde \historiesfiltering_{:t}\bigr)
    =
    \Pr\bigl(\widetilde \Observations_{t:}\mid \Actions_{t:}, \widetilde \HistoriesFiltering_{:t}=\widetilde \historiesfiltering'_{:t}\bigr).
\end{align}
Let $\epsilon_{\mathrm{sr}-\Environment}(\widetilde \historiesfiltering_{:t})
\coloneqq
[\widetilde \historiesfiltering_{:t}]_{\sim_{\epsilon_{\mathrm{sr}-\Environment}}}$ and $\STATES^{\mathrm{sr}-\Environment}
\coloneqq
\widetilde{\HISTORIESFILTERING}_{:t}/\sim_{\epsilon_{\mathrm{sr}-\Environment}}$.
The induced transition matrices are
\begin{align}
    T^{\mathrm{sr}-\Environment, (\widetilde{\observations} \mid \actions)}_{\states \to \states'}
    \coloneqq
    \Pr\bigl(\States^{\mathrm{sr}-\Environment}_{t+1} = \states', \widetilde \Observations_{t} = \widetilde{\observations}
\mid \States^{\mathrm{sr}-\Environment}_{t} = \states, \Actions_{t} = \actions\bigr).
\end{align}

\begin{definition}[Canonical support-restricted environment model]
    \label{def:canonicalSupportRestrictedEnvironmentModel}
    The canonical support-restricted environment model is the $\epsilon$-transducer $(\ACTIONS, \widetilde{\OBSERVATIONS}, \STATES^{\mathrm{sr}-\Environment}, \mathcal T^{\mathrm{sr}-\Environment})$ of the support-restricted environment channel $\widetilde \Observations_: \mid \Actions_:$.
\end{definition}
This $\epsilon$-transducer always contains a \emph{sink state}, which is the equivalence class of all extended histories containing at least one occurrence of $\bot$, generated by an unsupported action query.
\begin{proposition}[The totalisation sink]
    \label{prop:sinkState}
    All extended histories containing at least one occurrence of $\bot$ belong to the same causal state, denoted by $\states^\Environment_\bot$. 
    It is absorbing, and for every $\actions \in \ACTIONS$, $T^{\mathrm{sr}-\Environment, (\bot \mid \actions)}_{\states^\Environment_\bot\to \states^\Environment_\bot} = 1$.
\end{proposition}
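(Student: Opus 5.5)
The plan is to show directly that any two extended histories containing $\bot$ induce the same conditional law of future outputs given future inputs, then read off the transition entries from the resulting unifilar dynamics.

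\textbf{Step 1: the future law after a $\bot$.} Let $\widetilde \historiesfiltering_{:t}$ contain an occurrence of $\bot$. By clause 1 of \cref{def:supportRestrictedEnv}, for every $\actions_t \in \ACTIONS$ we have $\widetilde\Observations_t=\bot$ with probability one. Any one-step extension $\widetilde \historiesfiltering_{:t}(\actions_t,\bot)$ still contains $\bot$, so clause 1 applies again at time $t+1$. Inducting on the horizon $L$, for every finite action word $\actions_{t:t+L}$ we get
\begin{align*}
    \Pr\bigl(\widetilde\Observations_{t:t+L}=\widetilde\observations_{t:t+L}\mid \Actions_{t:t+L}=\actions_{t:t+L}, \widetilde\HistoriesFiltering_{:t}=\widetilde\historiesfiltering_{:t}\bigr)=\prod_{k=t}^{t+L-1}\mathbb 1[\widetilde\observations_k=\bot].
\end{align*}
The chain rule for the channel expresses the word probability as a product of one-step clause-1 probabilities, each evaluated on an extended history that contains $\bot$. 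The right-hand side depends on neither $\widetilde\historiesfiltering_{:t}$ nor the action word.

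\textbf{Step 2: one equivalence class.} The cylinder laws in Step 1 coincide for all $\bot$-containing histories. By the usual Kolmogorov-extension argument, the conditional laws on infinite futures given $\Actions_{t:}$ therefore coincide, and all such histories are $\sim_{\epsilon_{\mathrm{sr}-\Environment}}$-equivalent. We call this class $\states^\Environment_\bot$. It is nonempty: an extended history such as one ending in $(\actions,\bot)$ is admissible in $\widetilde{\HISTORIESFILTERING}_{:t}$.

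One might also worry whether some ordinary history falls into the same class. This does not affect the statement, which only asserts that all $\bot$-histories lie in one state. Should it matter later, ordinary histories emit an ordinary symbol on some action, because $\ACTIONS_1(\historiesfiltering_{:t})\neq\emptyset$ for histories of positive probability.

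\textbf{Step 3: absorption and the transition entry.} Fix $\actions$. From state $\states^\Environment_\bot$, the output is $\bot$ with probability one by Step 1. The successor extended history $\widetilde\historiesfiltering_{:t}(\actions,\bot)$ contains $\bot$, so its causal state is again $\states^\Environment_\bot$. This is where we use the unifilarity of the $\epsilon$-transducer: the next state is determined by the current state, the input and the output. Hence $T^{\mathrm{sr}-\Environment,(\bot\mid\actions)}_{\states^\Environment_\bot\to\states^\Environment_\bot}=1$, and all other entries out of $\states^\Environment_\bot$ vanish, which is exactly absorption.

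The main obstacle is making Step 2 rigorous. It needs the passage from the one-step clause to the full future conditional law, through the chain-rule factorisation of the channel and the extension from cylinders. It also needs to respect the convention that conditional equalities are required only where they are defined.
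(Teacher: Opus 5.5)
Your proposal is correct and follows the paper's own argument. Both proofs observe that clause 1 of the support-restricted channel forces the output $\bot^\infty$ under every input continuation from any $\bot$-containing history, so all such histories share one future morph, and the sink transition follows because the successor history after emitting $\bot$ still contains $\bot$; you simply spell out the cylinder induction and extension step that the paper compresses into a single line.
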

\begin{proof}
    See~\cref{sec:proofs}.
\end{proof}

The canonical support-restricted agent model is defined dually, by restricting future observation continuations to those supported by the realised coupling and totalising unsupported observation queries with an agent-side sink.
Interestingly, the restrictions of future actions for the environment and of future observations for the agent model, which use information from the joint model, lead to some interesting relations among these models, as proven next.

\subsection{Relations among canonical support-restricted models and the canonical joint model}
Here we look at the relations among these canonical support-restricted models and the canonical joint model, proving a result that cannot be obtained for their unrestricted counterparts without extra structure (e.g. a factorisation assumptions) as seen in~\cref{sec:extraStructure}.
First we show that the non-sink states of the canonical support-restricted environment are induced by the causal states of the joint process.
\begin{theorem}[Canonical support-restricted environment states factor through the joint $\epsilon$-machine causal states]
    \label{th:supportRestrictedEnvironmentFactors}
    Let $\epsilon_\Joint: \HISTORIESFILTERING_{:t} \to \STATES^\Joint$
    be the causal-state map of the canonical joint model. 
    If ordinary histories $\historiesfiltering_{:t}, \historiesfiltering'_{:t} \in \HISTORIESFILTERING_{:t}$ satisfy $\epsilon_\Joint(\historiesfiltering_{:t}) = \epsilon_\Joint(\historiesfiltering'_{:t})$, this implies that $\epsilon_{\mathrm{sr}-\Environment}(\historiesfiltering_{:t}) = \epsilon_{\mathrm{sr}-\Environment}(\historiesfiltering'_{:t})$.
    In other words, there exists a unique surjective map $\psi: \STATES^\Joint \to \STATES^{\mathrm{sr}-\Environment} \setminus \{\states^\Environment_\bot\}$ such that for every ordinary history $\historiesfiltering_{:t} \in \HISTORIESFILTERING_{:t}$, $\epsilon_{\mathrm{sr}-\Environment}(\historiesfiltering_{:t}) = \psi \bigl(\epsilon_\Joint(\historiesfiltering_{:t})\bigr)$, i.e. 
    % $\epsilon_{\mathrm{sr}-\Environment} = \psi \circ \epsilon_\Joint$.
    \begin{equation}
        % https://q.uiver.app/#q=WzAsMyxbMCwwLCJcXGJ1bGxldCJdLFsxLDAsIlxcYnVsbGV0Il0sWzEsMSwiXFxidWxsZXQiXSxbMCwxLCJhIl0sWzAsMiwiYyIsMl0sWzEsMiwiYiJdXQ==
        \begin{tikzcd}
        	{\HISTORIESFILTERING_{:t}} & {\STATES^\Joint} \\
        	& {\STATES^{\mathrm{sr}-\Environment} \setminus \{\states^\Environment_\bot\}}
        	\arrow["\epsilon_\Joint", two heads, from=1-1, to=1-2]
        	\arrow["\epsilon_{\mathrm{sr}-\Environment}"', two heads, from=1-1, to=2-2]
        	\arrow["\psi", two heads, from=1-2, to=2-2]
        \end{tikzcd}
    \end{equation}
\end{theorem}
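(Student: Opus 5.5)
The plan is to show that the whole support-restricted channel law from an ordinary history is a function of the joint conditional future law $\Pr((\Actions,\Observations)_{t:}\mid\HistoriesFiltering_{:t}=\historiesfiltering_{:t})$. Once this holds, joint equivalence implies $\sim_{\epsilon_{\mathrm{sr}-\Environment}}$-equivalence, and $\psi$ is well defined.

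\textbf{Step 1 (finite-horizon reduction).} The law $\Pr(\widetilde\Observations_{t:}\mid\Actions_{t:},\widetilde\HistoriesFiltering_{:t}=\historiesfiltering_{:t})$ is determined by its finite-dimensional cylinder probabilities. Each of these is a product of one-step kernels from \cref{def:supportRestrictedEnv}, obtained by the chain rule along the extended history:
\begin{align*}
\Pr\bigl(\widetilde\Observations_{t:t+k}=\widetilde\observations_{t:t+k}\mid \Actions_{t:t+k}=\actions_{t:t+k},\widetilde\HistoriesFiltering_{:t}=\historiesfiltering_{:t}\bigr)
=\prod_{j=0}^{k-1}\Pr\bigl(\widetilde\Observations_{t+j}=\widetilde\observations_{t+j}\mid\Actions_{t+j}=\actions_{t+j},\widetilde\HistoriesFiltering_{:t+j}=\historiesfiltering_{:t}\,(\actions,\widetilde\observations)_{t:t+j}\bigr).
\end{align*}
I will argue by induction on $j$ that each factor depends on $\historiesfiltering_{:t}$ only through the joint law $P_{\historiesfiltering}:=\Pr((\Actions,\Observations)_{t:}\mid\HistoriesFiltering_{:t}=\historiesfiltering_{:t})$.

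\textbf{Step 2 (case split on the extended word).} There are three cases for the word $(\actions,\widetilde\observations)_{t:t+j}$.

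\emph{Case (a): the word contains $\bot$.} The factor is $\mathbb 1[\widetilde\observations_{t+j}=\bot]$ by clause 1 of \cref{def:supportRestrictedEnv}, which does not depend on $\historiesfiltering_{:t}$ at all.

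\emph{Case (b): the word is ordinary and has positive joint probability under $P_{\historiesfiltering}$.} The one-step joint law at the extended history equals the ratio
$P_{\historiesfiltering}\bigl((\actions,\observations)_{t:t+j}(\actions_{t+j},\cdot)\bigr)\big/P_{\historiesfiltering}\bigl((\actions,\observations)_{t:t+j}\bigr)$. Hence both $\ACTIONS_1$ and the normalised observation law in clause 2 are functions of $P_{\historiesfiltering}$.

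\emph{Case (c): the word is ordinary but has zero joint probability.} Here the one-step conditional is not defined by the realised process.

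\textbf{Step 3 (main obstacle: Case (c)).} I expect Case (c) to be the main difficulty. I will first show that such prefixes are never reached with positive probability by the totalised channel. Starting from $\historiesfiltering_{:t}$, every ordinary prefix produced with positive probability has positive joint probability. This follows by induction: a supported action together with an observation of positive normalised probability has positive joint probability. Unsupported actions lead to $\bot$, which is Case (a).

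It follows that cylinder probabilities containing a Case (c) prefix are multiplied by a zero earlier factor. Under the paper's convention that conditional equalities are required only on positive-probability realisations, these cases do not affect the channel law. With this handled, both histories give identical cylinder probabilities, so $\historiesfiltering_{:t}\sim_{\epsilon_{\mathrm{sr}-\Environment}}\historiesfiltering'_{:t}$.

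\textbf{Step 4 (conclusion).} I will then define $\psi(\epsilon_\Joint(\historiesfiltering_{:t})):=\epsilon_{\mathrm{sr}-\Environment}(\historiesfiltering_{:t})$, which is well defined by Steps 1–3. Its image avoids $\states^\Environment_\bot$. To see this, note that an ordinary history has a realisable action, since the one-step joint law sums to $1$. On that action it emits ordinary observations with positive probability, whereas the sink emits $\bot$ with probability one by \cref{prop:sinkState}.

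Surjectivity onto the non-sink states holds for two reasons. Every non-sink class contains an ordinary history, because histories containing $\bot$ all lie in the sink. And $\epsilon_\Joint$ is surjective. Uniqueness of $\psi$ follows from the surjectivity of $\epsilon_\Joint$.
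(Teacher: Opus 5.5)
Your proposal is correct and follows essentially the same route as the paper: you decompose the totalised channel's finite cylinders by the chain rule and treat the $\bot$ case, the zero-probability-prefix case, and the supported case exactly as the paper's induction on word length does. The paper passes to the equal joint causal states of the extended histories $(\historiesfiltering_{:t},\actions_t,\observations_t)$, whereas you write each one-step factor directly as a ratio of joint cylinder probabilities, which is the same argument unrolled. Your explicit check that an ordinary history cannot land in the sink (some action has $q(\actions\mid\historiesfiltering_{:t})>0$) is a point the paper leaves implicit, but it is needed for $\psi$ to take values in $\STATES^{\mathrm{sr}-\Environment}\setminus\{\states^\Environment_\bot\}$.
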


\begin{proof}
    See~\cref{sec:proofs}.
\end{proof}

This gives
$
    |\STATES^{\mathrm{sr}-\Environment}\setminus\{\states^\Environment_\bot\}|
    \leq
    |\STATES^\Joint|,
$
i.e.
$
    |\STATES^{\mathrm{sr}-\Environment}|
    \leq
    |\STATES^\Joint|+1
$
considering the totalisation sink.
Thus the ordinary canonical support-restricted environment states are quotients of joint causal states.
The transition structure is similarly also induced from the canonical joint model.

\begin{theorem}[Transitions induced from the canonical joint model]
    \label{th:transitionsEnvironmentInduced}
    Let $\states \in \STATES^{\mathrm{sr}-\Environment}\setminus\{\states^\Environment_\bot\}$ and choose any $s^\Joint\in \psi^{-1}(\states)$.
    Write
    $
        \prob^\Joint(\actions, \observations\mid\states^\Joint)
        \coloneqq
        \Pr((\Actions, \Observations)_{t} = (\actions, \observations)
        \mid
        \States^\Joint_{t} = \states^\Joint),
    $
    and define the marginal $q(\actions \mid \states^\Joint) \coloneqq \sum_{\observations \in \OBSERVATIONS} \prob^\Joint(\actions, \observations \mid \states^\Joint)$.
    For \(\prob^\Joint(\actions, \observations \mid \states^\Joint) > 0\), let \(\delta^\Joint(\states^\Joint, \actions, \observations)\) denote the unique successor joint state given by unifilarity.
    The transitions of the canonical support-restricted environment model are given, independent of the choice of representative $\states^\Joint \in \psi^{-1}(\states)$, by:
    \begin{enumerate}
        \item If $q(\actions \mid \states^\Joint) = 0$,
            $
                T^{\mathrm{sr}-\Environment,(\bot \mid \actions)}_{\states \to \states^\Environment_\bot} = 1
            $
            and
            $T^{\mathrm{sr}-\Environment, (\observations \mid \actions)}_{\states \to \states'} = 0$
            for all $\observations \in \OBSERVATIONS$ and $\states' \in \STATES^{\mathrm{sr}-\Environment}$.
        
        \item If $q(\actions \mid \states^\Joint) > 0$, 
            $
                T^{\mathrm{sr}-\Environment,(\bot \mid \actions)}_{\states \to \states'}
                =
                0
            $
            for all $\states' \in \STATES^{\mathrm{sr}-\Environment}$ and for all $\observations \in \OBSERVATIONS$, 
        \[
            T^{\mathrm{sr}-\Environment,(\observations \mid \actions)}_{\states \to \states'}
            =
            \begin{cases}
                \dfrac{\prob^\Joint(\actions, \observations\mid \states^\Joint)}{q(\actions \mid \states^\Joint)}
                \mathbf 1\!\left\{\states'=\psi\bigl(\delta^\Joint(\states^\Joint, \actions, \observations)\bigr)\right\},
                & \text{if } \prob^\Joint(\actions, \observations\mid \states^\Joint) > 0, \\
                % [2ex]
                0,
                & \text{if } \prob^\Joint(\actions, \observations\mid \states^\Joint)=0.
            \end{cases}
        \]
        \item For the sink state, for all $\actions \in \ACTIONS$,
            $
                T^{\mathrm{sr}-\Environment,(\bot \mid \actions)}_{\states^\Environment_\bot \to \states^\Environment_\bot} = 1.
            $
            and
            $T^{\mathrm{sr}-\Environment, (\observations \mid \actions)}_{\states^\Environment_\bot \to \states'} = 0$
            for all $\observations \in \OBSERVATIONS$ and $\states' \in \STATES^{\mathrm{sr}-\Environment}$.
    \end{enumerate}
\end{theorem}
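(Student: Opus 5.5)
The plan is to compute the transitions directly at the level of ordinary histories and then push everything through the factorisation of \cref{th:supportRestrictedEnvironmentFactors}. Fix a non-sink state $\states$, a representative $\states^\Joint \in \psi^{-1}(\states)$, and any ordinary history $\historiesfiltering_{:t}$ with $\epsilon_\Joint(\historiesfiltering_{:t}) = \states^\Joint$. Such a history exists because $\epsilon_\Joint$ is surjective, and by \cref{th:supportRestrictedEnvironmentFactors} it satisfies $\epsilon_{\mathrm{sr}-\Environment}(\historiesfiltering_{:t}) = \states$.

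The first step is to note that the one-step joint law depends on the history only through its joint causal state:
\[
\Pr\bigl((\Actions, \Observations)_{t}=(\actions, \observations)\mid\HistoriesFiltering_{:t}=\historiesfiltering_{:t}\bigr)=\prob^\Joint(\actions, \observations\mid\states^\Joint).
\]
This is the length-one marginal of \cref{eq:jointCausalStates}. It follows that $q(\actions\mid\historiesfiltering_{:t}) = q(\actions\mid\states^\Joint)$, and hence $\ACTIONS_1(\historiesfiltering_{:t}) = \{\actions : q(\actions\mid\states^\Joint)>0\}$.

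Next I substitute this into clause 2 of \cref{def:supportRestrictedEnv}. This reads off the emission probabilities $\Pr(\widetilde\Observations_t=\widetilde\observations\mid \Actions_t=\actions,\historiesfiltering_{:t})$ directly. If $q(\actions\mid\states^\Joint)=0$, then $\bot$ is emitted with probability $1$. Otherwise $\observations$ is emitted with probability $\prob^\Joint(\actions,\observations\mid\states^\Joint)/q(\actions\mid\states^\Joint)$, and $\bot$ with probability $0$.

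For the successor states there are two cases.
\begin{itemize}
\item After emitting $\bot$, the extended history contains $\bot$. By \cref{prop:sinkState} it lies in $\states^\Environment_\bot$.
\item After emitting an ordinary $\observations$ with positive probability, the extended history $\historiesfiltering_{:t}(\actions,\observations)$ is ordinary. Its joint causal state is $\delta^\Joint(\states^\Joint,\actions,\observations)$ by unifilarity of the joint $\epsilon$-machine. Applying \cref{th:supportRestrictedEnvironmentFactors} at time $t+1$, its support-restricted state is $\psi(\delta^\Joint(\states^\Joint,\actions,\observations))$.
\end{itemize}
The successor is therefore determined, and the transition matrix entries are the emission probabilities times the corresponding indicator. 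Clause 3 is exactly \cref{prop:sinkState}, together with the fact that from the sink $\bot$ has probability one, so ordinary emissions have probability zero.

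The main obstacle is showing that these formulas really are the transitions $T^{\mathrm{sr}-\Environment}$ defined through the causal-state process, and that they do not depend on the choices made. Two independence claims are needed.
\begin{itemize}
\item \emph{Independence of the history within a causal state.} Any ordinary history in the class $\states$ gives the same one-step emission law, because this law is the length-one marginal of $\Pr(\widetilde\Observations_{t:}\mid\Actions_{t:},\cdot)$, which is constant on $\states$. The successor is well defined by unifilarity of the $\epsilon$-transducer: equivalent histories extended by the same symbol remain equivalent. I will invoke this standard property from \cref{sec:epsTransducers}. If it is not available there, I will prove it by conditioning the future law on the first symbol.
\item \emph{Independence of the representative $\states^\Joint\in\psi^{-1}(\states)$.} Different representatives correspond to different histories in the same class $\states$. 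So the emission probabilities agree by the previous point, and so does the successor state $\psi(\delta^\Joint(\cdot))$, since the successor is determined by the history class and the symbol.
\end{itemize}
One subtlety needs care in this argument. The transition probability conditioned on $\States_t=\states$ is an average over the histories in $\states$; because the emission law is constant on the class, this average equals the common value.
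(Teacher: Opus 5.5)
Your proposal is correct and follows essentially the same route as the paper. Both arguments use the same steps:
\begin{enumerate}
    \item The one-step joint law is constant on a joint causal state, so \cref{def:supportRestrictedEnv} gives the emission law.
    \item Unifilarity of the joint machine, together with the factorisation $\psi$, gives the successor state.
    \item All histories lying over $\psi^{-1}(\states)$ share one support-restricted causal state, and its common future morph fixes the branch condition and the normalised emission probabilities. Conditioning that morph on a common positive-probability prefix also fixes the successor, which gives independence of the representative.
\end{enumerate}
The differences are only in presentation. You compute at a fixed history and check independence afterwards, whereas the paper checks independence first. You also spell out the averaging step that identifies $T^{\mathrm{sr}-\Environment}$, conditioned on the causal state, with the class-constant history-level law; the paper leaves that step implicit.
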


\begin{proof}
    See~\cref{sec:proofs}.
\end{proof}

At this point the reconstruction has been proved component by component, but not yet stated as a model-level consequence.
\Cref{th:supportRestrictedEnvironmentFactors} gives the non-sink state space from the joint causal states, \cref{th:transitionsEnvironmentInduced} shows that the non-sink transition probabilities are functions of joint-state data, and \cref{prop:sinkState} supplies the remaining sink state and its transitions.
The next theorem packages these pieces into the claim that the canonical joint model determines the entire canonical support-restricted environment model.

\begin{theorem}[Joint reconstruction of the canonical support-restricted environment model]
    \label{th:jointReconstructsSupportRestrictedEnvironment}
    The canonical joint model determines the canonical support-restricted environment model
    $
        (\ACTIONS,\widetilde{\OBSERVATIONS},\STATES^{\mathrm{sr}-\Environment},\mathcal T^{\mathrm{sr}-\Environment})
    $
    of~\cref{def:canonicalSupportRestrictedEnvironmentModel}.
    More explicitly, the interface fixes $\ACTIONS$, the totalisation construction fixes $\widetilde{\OBSERVATIONS}=\OBSERVATIONS\cup\{\bot\}$, \cref{th:supportRestrictedEnvironmentFactors,prop:sinkState} determine the state space, and \cref{th:transitionsEnvironmentInduced,prop:sinkState} determine the transition family.
\end{theorem}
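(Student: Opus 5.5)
The argument is a packaging of the earlier results. The goal is to exhibit an explicit construction that takes only the data of the canonical joint model $(\ACTIONS\times\OBSERVATIONS,\STATES^\Joint,\mathcal T^\Joint)$ and returns a transducer isomorphic to $(\ACTIONS,\widetilde{\OBSERVATIONS},\STATES^{\mathrm{sr}-\Environment},\mathcal T^{\mathrm{sr}-\Environment})$. I will proceed in three steps: state space, transitions, then the equivalence of the construction with the canonical model.

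\emph{Step 1: state space.} From $\mathcal T^\Joint$ I read off, for each $\states^\Joint$, the one-step law $\prob^\Joint(\actions,\observations\mid\states^\Joint)$ and the successor map $\delta^\Joint$. By unifilarity, these determine the full future law $\Pr(\widetilde\Observations_{t:}\mid\Actions_{t:},\States^\Joint_t=\states^\Joint)$ of the support-restricted channel. Concretely, this is obtained by iterating the recipe of \cref{def:supportRestrictedEnv}, with $q(\actions\mid\states^\Joint)$ replacing $q(\actions\mid\historiesfiltering_{:t})$ and $\bot$ becoming absorbing. Define $\states^\Joint\approx\states'^\Joint$ when these channel laws coincide. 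I then claim that $\psi$ of \cref{th:supportRestrictedEnvironmentFactors} is exactly the quotient map by $\approx$, so that $\STATES^{\mathrm{sr}-\Environment}\setminus\{\states^\Environment_\bot\}\cong\STATES^\Joint/\!\approx$.

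Two facts are needed for this claim. First, for an ordinary history $\historiesfiltering_{:t}$, the support-restricted future law depends on $\historiesfiltering_{:t}$ only through $\epsilon_\Joint(\historiesfiltering_{:t})$. This is the content of \cref{th:supportRestrictedEnvironmentFactors}, and it also makes the iterated law above well defined. Second, $\epsilon_{\mathrm{sr}-\Environment}$ identifies ordinary histories precisely when these laws agree, which is its definition. Surjectivity of $\psi$ gives that every non-sink state arises this way. Adjoining one extra element for $\states^\Environment_\bot$ via \cref{prop:sinkState} then completes the state space.

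\emph{Step 2: transitions.} On non-sink classes I will use the formulas of \cref{th:transitionsEnvironmentInduced}. They involve only $\prob^\Joint$, $q$, $\delta^\Joint$ and $\psi$, all of which are computed from the joint model. The theorem already guarantees independence from the choice of representative in $\psi^{-1}(\states)$. On the sink, \cref{prop:sinkState} (equivalently, item 3 of \cref{th:transitionsEnvironmentInduced}) fixes the transitions. The alphabets $\ACTIONS$ and $\widetilde\OBSERVATIONS$ are fixed by the interface and by the totalisation construction.

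\emph{Step 3: equivalence.} The main obstacle is making ``determines'' precise. I will formulate it as follows: the construction $\Phi$ described above is a function of the joint model alone. Moreover, any two joint processes with isomorphic canonical joint models have isomorphic canonical support-restricted environment models. This isomorphism invariance follows because every ingredient ($\approx$, $q$, $\delta^\Joint$) is defined intrinsically from $\mathcal T^\Joint$.

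The delicate point is Step 1's use of $\approx$ rather than simply citing $\psi$. Citing $\psi$ as defined via histories would make the reconstruction circular, since $\psi$ refers to $\epsilon_{\mathrm{sr}-\Environment}$. I will therefore verify that $\psi(\states^\Joint)=\psi(\states'^\Joint)$ if and only if $\states^\Joint\approx\states'^\Joint$, by unwinding both sides into equality of the support-restricted future laws. This identification removes the circularity.
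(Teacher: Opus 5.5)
Your proof is correct. It has the same skeleton as the paper's: alphabets from the interface and the totalisation, non-sink states from \cref{th:supportRestrictedEnvironmentFactors} plus the sink from \cref{prop:sinkState}, and transitions from \cref{th:transitionsEnvironmentInduced} together with \cref{prop:sinkState}.

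The difference is that the paper stops at citing $\psi$ and asserting that the four components are the $\epsilon$-transducer. It does not address the point you raise: $\psi$ is defined by $\psi(\states^\Joint)=\epsilon_{\mathrm{sr}-\Environment}(\historiesfiltering_{:t})$, so its fibres are read off the very object being reconstructed. Your relation $\approx$ is computed from $(\prob^\Joint,\delta^\Joint)$ alone. It identifies the fibres of $\psi$ intrinsically, so ``determines'' becomes an honest function of the joint model, with isomorphism invariance as a corollary. When $\STATES^\Joint$ is finite this is even effective. For a unifilar presentation, $\approx$ is the coarsest partition compatible with the normalised one-step laws (including the support pattern $q>0$) and the successor map, so it can be obtained by Moore-style refinement. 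The paper's version buys brevity and leaves all of this implicit.

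Two small points deserve to be made explicit in yours:
\begin{itemize}
\item The identity between the law iterated from $\states^\Joint$ and the history-based support-restricted law at every ordinary $\historiesfiltering_{:t}\in\epsilon_\Joint^{-1}(\states^\Joint)$ is not literally the statement of \cref{th:supportRestrictedEnvironmentFactors}, which only says that histories in one joint class agree with each other. It does follow from the same induction, using $\Pr((\Actions,\Observations)_t=\cdot\mid\HistoriesFiltering_{:t}=\historiesfiltering_{:t})=\prob^\Joint(\cdot\mid\epsilon_\Joint(\historiesfiltering_{:t}))$ and $\epsilon_\Joint(\historiesfiltering_{:t},\actions,\observations)=\delta^\Joint(\epsilon_\Joint(\historiesfiltering_{:t}),\actions,\observations)$.
\item ``Adjoining one extra element'' requires the sink to differ from every ordinary class. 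This holds because at every ordinary history some action has $q(\actions\mid\historiesfiltering_{:t})>0$, and under that action $\bot$ has probability zero. The paper also uses this silently in the codomain of $\psi$.
\end{itemize}
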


\begin{proof}
    See~\cref{sec:proofs}.
\end{proof}

The agent-side reconstruction is dual: exchanging actions and observations gives the corresponding map from joint causal states to non-sink support-restricted agent states and the induced transition structure.
\begin{table}[ht!]
    \centering
    \small
    \begin{tabular}{p{0.14\textwidth}p{0.38\textwidth}p{0.38\textwidth}}
        \toprule
        & Environment side & Agent side \\
        \midrule
        All future continuations
        &
        \textbf{Canonical environment model}

        Channel: \(\Observations_{:}\mid\Actions_{:}\)

        Predicts future observations under arbitrary future action continuations.
        &
        \textbf{Canonical agent model}

        Channel: \(\Actions_{:}\mid\Observations_{:}\)

        Predicts future actions under arbitrary future observation continuations.
        \\
        \midrule
        Re\-al\-is\-able continuations only
        &
        \textbf{Canonical support-restricted environment model}

        Channel: \(\widetilde{\Observations}_{:}\mid\Actions_{:}\)

        Predicts future observations under action continuations supported by the realised coupling, unsupported action queries emit an environment-side \(\bot\).
        &
        \textbf{Canonical support-restricted agent model}

        Channel: \(\widetilde{\Actions}_{:}\mid\Observations_{:}\) \emph{(dually)}

        Predicts future actions under observation continuations supported by the realised coupling, unsupported observation queries emit an agent-side \(\bot\).
        \\
        \bottomrule
    \end{tabular}
    \caption{
    Environment and agent models along two axes.
    The horizontal axis distinguishes which channel is modelled.
    The vertical axis distinguishes whether predictive equivalence ranges over all future input continuations or only continuations supported by the realised coupling.
    }
    \label{tab:environmentAgentAxes}
\end{table}

The abstract construction above changes the future action continuations against which environment histories are compared.
To see this in practical terms, we return to the running example makes this concrete: the ordinary futures retained by the support-restricted environment model are exactly those compatible with the controller's finite pattern of resets and holds.

\paragraph{Support restriction in the running example.}
\label{ex:supportRestrictedRunning}
Recall (\cref{ex:Example}) that the unrestricted environment model is infinite because arbitrary hold continuations allow unbounded accumulation of evidence about the latent environment state.
The support-restricted construction considers a different object: the environment model induced by the actual coupling to the finite-state controller.
In this controller, action $0$ is realisable in mode $\alpha$, while action $1$ is realisable in modes $\beta$ and $\gamma$.
Thus, querying action $1$ from histories in mode $\alpha$, or querying action $0$ from histories in modes $\beta$ or $\gamma$, is unsupported and is sent to the sink state.
The important mechanism is that the controller bounds hold continuations: after at most two hold actions, it returns to reset.
Therefore the canonical support-restricted environment model never has to represent arbitrarily long hold continuations as ordinary futures.
The realised joint process has five causal states, see~\cref{ex:example}.
The non-sink canonical support-restricted environment states are their images under the factorisation map $\psi$ and are also five, see~\cref{ex:example}.
Together with the totalisation sink $\states^\Environment_\bot$, this gives six canonical support-restricted environment states in the example.

This illustrates~\cref{th:jointReconstructsSupportRestrictedEnvironment}.
The example-specific mechanism is that the controller bounds the supported hold continuations.
The general point is that the ordinary canonical support-restricted environment states are induced from the joint causal states, while unsupported action queries are recorded by the sink.
Thus the same setup can have an infinite unrestricted environment model but a finite canonical support-restricted environment model induced by the realised coupling.

\section{Related work}
\label{sec:related}

Most computational-mechanics treatments of world models in ML focus on the environment channel, giving machine or transducer presentations of environment-side predictive structure~\citep{shaiTransformersRepresentBelief2024, rosasAIVatFundamental2025, boydMonolithsModulesDecomposing2025, riechersNeuralNetworksLeverage2025, riechersNexttokenPretrainingImplies2025, piotrowskiConstrainedBeliefUpdates2025, shaiTransformersLearnFactored2026}.
This connects to observable operator models~\citep{jaegerDiscretetimeDiscretevaluedObservable1998, thonLinksMultiplicityAutomata2015}, predictive state representations~\citep{littmanPredictiveRepresentationsState2001, singhPredictiveStateRepresentations2004}, belief MDPs and bisimulation-style abstractions~\citep{rosasAIVatFundamental2025}, while $\epsilon$-transducers give the canonical minimal unifilar presentation of the environment channel~\citep{barnettComputationalMechanicsInput2015}.
AIXI also provides a Bayesian environment-model perspective for exhaustive planning~\citep{hutterUniversalArtificialIntelligence2005}, and information-theoretic quantities such as empowerment~\citep{klyubin2005empowerment} and directed-information/plasticity constructions~\citep{abelPlasticityMirrorEmpowerment2025} similarly depend on which agent-environment channel is being considered.
Models of the environment are often simply called world models~\citep{haWorldModels2018, dingUnderstandingWorldPredicting2025}.
We however separate these from agent and joint models.
Agent-side predictive models are related to self-predictive universal AI~\citep{cattSelfPredictiveUniversalAI2023}, policy distillation~\citep{rusuPolicyDistillation2016}, plasticity as the dual of empowerment~\citep{abelPlasticityMirrorEmpowerment2025}, and habit-formation models predicting actions from sensorimotor histories~\citep{egbertModelingHabitsSelfsustaining2014, egbertHabitbasedRegulationEssential2014a}.
In partially observable RL, policies can be viewed as maps from histories to actions~\citep{murphyReinforcementLearningOverview2025}, while finite-state controllers give explicit stateful presentations of such policies and are often used to build or interpret recurrent policies~\citep{meuleauLearningFinitestateControllers1999, koulLearningFiniteState2018}; our agent model is the corresponding canonical predictive-state construction for the channel $\Actions_{:}\mid\Observations_{:}$.
Joint models relate to embedded agency~\citep{demskiEmbeddedAgency2020, meulemansEmbeddedUniversalPredictive2025}, coarse-grainings between autonomous systems and open components, Bayesian inversions based on the internal model principle~\citep{baltieriBayesianInterpretationInternal2025, baltieriMathematicalApproachesStudy2025}, and computationally embedded settings such as universal-local environments~\citep{lewandowskiWorldBiggerComputationallyEmbedded2025}.
Finally, support restriction is related to the distinction between arbitrary counterfactual prediction and prediction under policy-, dataset-, controller-, or coupling-induced support.
On the environment side this is analogous to on-/off-policy and offline model learning, where data may come from older policies, experts, teachers, or other sources that the current agent cannot reproduce~\citep{torresanDisentangledRepresentationsCausal2024, rossReductionImitationLearning2011, ostrovskiDifficultyPassiveLearning2021, levineOfflineReinforcementLearning2020}, and to adaptive offline RL outside the original data support~\citep{ghoshOfflineRLPolicies2022}.
It also resonates with $\pi$-bisimulations or on-policy bisimulations~\citep{castroScalableMethodsComputing2020}, which contrast with standard bisimulation abstractions that quantify over all actions rather than only actions realised by a policy.

\section{Conclusions}
\label{sec:conclusions}

In this work we introduced a formal characterisation of world models according to the channel they model: the environment channel, the agent channel, or the realised joint process viewed as a null-input channel.
Using tools from computational mechanics, we defined canonical models that are unique, minimal, and unifilar.
On the environment side, this connects to predictive state representations and belief-MDP-style abstractions, on the agent side, to action-predictive models, policy distillation, and finite-state controllers, and on the joint side, to policy-induced processes and embedded-agency perspectives.

By considering the realised joint process, we then defined canonical support-restricted environment and agent models induced by closed-loop coupling.
These models differ from ordinary canonical environment or agent models because their predictive equivalences are defined relative to continuations supported by the coupling, with unsupported queries handled by totalisation.
On the environment side, we proved that the non-sink canonical support-restricted states factor through the joint causal states, and that the transition structure is induced from the joint model.
The example illustrates that the same finite system can have an infinite unrestricted environment model but a finite canonical support-restricted environment model.

This approach can be further extended to formalise situations when, for instance, an agent considers agents other than itself, such as in behavioural cloning, or joint models of a different agent and a different environment.
An interesting avenue for future work is the application of our definitions to the analysis of trained AI models, particularly when we move past the idealised scenarios of canonical models treated here and consider what \emph{beliefs} can be attributed to an AI system.
This can be achieved, for instance, using mixed-state presentations of both machines~\citep{riechersSpectralSimplicityApparent2018} and transducers~\citep{rosasAIVatFundamental2025}, which would yield agent- and joint-side belief-based counterparts of environment models such as belief MDPs that could be used to identify more structure in, e.g. transformers~\citep{shaiTransformersRepresentBelief2024, riechersNeuralNetworksLeverage2025, riechersNexttokenPretrainingImplies2025, piotrowskiConstrainedBeliefUpdates2025, shaiTransformersLearnFactored2026}.

\section*{Acknowledgments}
Redacted for double blind peer review.

The authors thank Martin Biehl and Scott Garrabrant for inspiring discussions and useful feedback. 
M.B. and F.T. were supported by JST, Moonshot R\&D, Grant Number JPMJMS2012.
M.B. was supported by Advanced Research + Invention Agency (ARIA) through project code MSAI-SE01-P011.

\bibliography{AllEntriesZotero}
\bibliographystyle{abbrv}

\begin{appendices}
\crefalias{section}{appendix}

\appendix
\appendix
%============================================================
\section{Computational mechanics: a brief overview}
%============================================================
\label{sec:compMech}

\subsection{Canonical models}
\subsubsection{$\epsilon$-machines}
Initially, we consider a univariate bi-infinite discrete-time stochastic process of the form $X_: = \{X_t\}_{t \in \Z}$, on a finite alphabet $\mathcal{X}$.
A stochastic process is specified by a consistent family of finite-dimensional distributions
\begin{align}
    \Pr(X_{t:t+L}=x_{t:t+L}),
    \qquad
    x_{t:t+L}\in \mathcal{X}^L,
\end{align}
for all $t \in \Z$ and $L \in \N^+$, where $\mathcal{X}^L$ is the set of words of length $L$.
Equivalently, a consistent family of finite-dimensional distributions specifies probabilities for cylinder events in the space of bi-infinite sequences $\mathcal{X}_{:}$.
For a finite word $x_{t:t+L}\in \mathcal{X}^{L}$, the corresponding cylinder is
\[
    [x_{t:t+L}]
    \coloneqq
    \{x'_{:}\in \mathcal{X}_{:}\mid x'_{t:t+L}=x_{t:t+L}\}.
\]
The finite-dimensional probability $\Pr(X_{t:t+L}=x_{t:t+L})$ is then the probability assigned to this cylinder.
Under the usual consistency conditions, these cylinder probabilities determine a probability measure on $\mathcal{X}_{:}$.

A stochastic process $X_:$ is stationary if its finite-dimensional distributions are invariant under time shifts, i.e.
$
    \Pr(X_{t:t+L} = x_{0:L})
    =
    \Pr(X_{0:L} = x_{0:L})
$
for all $t \in \Z$, $L \in \N^+$, and $x_{0:L}\in \mathcal{X}^L$.
From here onwards, we only consider stationary processes.
A presentation of this process can be given in terms of a \emph{machine}.
\begin{definition}[Machine]
    A machine presentation of $X_:$ is given by a triple $(\mathcal{X}, \mathcal{Z}, \mathcal{T})$ where $\mathcal{Z}$ is a state space and $\mathcal{T}$ are transition dynamics by Markov kernels (or stochastic matrices in the finite case) $\mathcal{T} = \{ T^{(x)} \}_{x \in \mathcal{X}}$ given by:
    \begin{align}
        \mathcal{T} = 
        \{T^{(x)}\}_{x \in \mathcal{X}}, \quad 
        T_{z \to z'}^{(x)} \coloneqq \Pr (Z_{t+1} = z', X_{t} = x \mid Z_{t} = z).
    \end{align}
\end{definition}
When the process is time-independent, we simply write them as $T_{\states \to \states'} = T^{(x)}_{\states \to \states'}$.

Intuitively, computational mechanics gives us a way to construct particularly interesting machine presentations of a process.
Informally, state variables are defined by forming a latent state space for a stationary stochastic system $X_:$ by quotienting histories of $X_:$ into equivalence classes that have identical predictive consequences.
%For a stationary process $\{X_{t}\}_{t\in \Z}$ on a finite alphabet $\mathcal{X}$, d
More formally, we define pasts and futures of $X_:$ as $X_{:t}$ and $X_{t:}$ for all $t \in \Z$ respectively, and an equivalence relation $\sim_\epsilon$ given by:
\begin{align}
    x_{:t} \sim_\epsilon x'_{:t} \iff \Pr(X_{t:} \mid X_{:t} = x_{:t}) = \Pr(X_{t:} \mid X_{:t} = x'_{:t})
\end{align}
The equivalence classes induced by $\sim_\epsilon$ give the so-called \emph{causal states}, forming a set $\STATES$, with associated random variables $\States_{t}$.
This relation can equivalently be seen as induced by a surjective map $\epsilon : \mathcal{X}_{:t} \to \STATES$ given by $\epsilon(x_{:t}) = \states = \{ x'_{:t} \mid x_{:t} \sim_\epsilon x'_{:t} \}$, with states $\states \in \STATES$.
Causal states are minimal sufficient statistics for prediction~\citep{shaliziComputationalMechanicsPattern2001}.
Sufficient means that $X_{t:} \indep X_{:t} \mid \States_{t}$ or $\Pr(X_{t:} \mid X_{:t}) = \Pr(X_{t:} \mid \States_{t})$,
and minimal that, for any other sufficient statistic $Z_{t}$, $Z_{t}$ is a refinement of $\States_{t}$.
The $\epsilon$-map induces dynamics over causal states: given a realised past $x_{:t}$ and the next emitted symbol $x_t$, the extended past $x_{:t+1} = (x_{:t}, x_t)$ determines the next causal state $\epsilon(x_{:t+1})$.
The corresponding transition kernels
% The $\epsilon$-map induces dynamics over pasts, whereby by appending a new observation $x_{t+1}$ to $x_{:t}$ defines a stochastic process with causal states as random variables and transitions between causal states given by Markov kernels 
$\mathcal{T} = \{ T^{(x)} \}_{x \in \mathcal{X}}$ are given by:
\begin{align}
    T^{(x)}_{\states \to \states'} = \Pr(\States_{t+1} = \states', X_{t} = x \mid \States_{t} = \states).
\end{align}
% When the process is time-independent, we simply write them as $T_{\states \to \states'} = T^{(x)}_{\states \to \states'}$.
\begin{definition}[$\epsilon$-machine~\citep{shaliziComputationalMechanicsPattern2001}]
    The $\epsilon$-machine of a stochastic process $X_:$ is the process’s unique, minimal, and unifilar (machine) presentation of the process, given by the tuple $(\mathcal{X}, \STATES, \mathcal{T})$.
\end{definition}
A presentation is unifilar if and only if, for every state $\states \in \STATES$ and symbol $x \in \mathcal{X}$, there is at most one next state $\states'$ with nonzero probability, i.e. $\forall \states, x, |\{\states': T_{\states \states^{\prime}}^{(x)} > 0\}| \leq 1$, such that $H[\States_{t+1} \mid X_{t}, \States_{t}] = 0$.
In other words, a unifilar presentation includes transitions that are deterministic given the current state and the emitted symbol, even though the symbol itself is generally random.

\subsubsection{$\epsilon$-transducers}
\label{sec:epsTransducers}

The above treatment has also been extended to input-output processes, or simply \emph{channels}.
Channels can be seen as defining input-output couplings between stochastic processes.
We consider univariate bi-infinite discrete-time channels, $Y_: \mid X_:$, with input alphabet $\mathcal{X}$ and output alphabet $\mathcal{Y}$.
A channel assigns to each input sequence $x_: \in \mathcal{X}_:$ a probability law over output sequences in $\mathcal{Y}_:$.
We write this family of output laws as
\begin{align}
    Y_: \mid X_: \coloneqq \{ Y_: \mid x_: \}_{x_: \in \mathcal{X}_:},
\end{align}
where $Y_: \mid x_:$ denotes the output stochastic process induced by the fixed input sequence $x_:$.
For each fixed $x_:$, the corresponding finite-dimensional laws are
\begin{align}
    \Pr(Y_{t:t+L} \mid x_:) 
    \coloneqq 
    \{ \Pr(Y_{t:t+L} = y_{t:t+L} \mid x_:) \}_{y_{t:t+L} \in \mathcal{Y}^L},
\end{align}
for all $t \in \Z$ and $L \in \N^+$.
Here $x_:$ is the input sequence supplied to the channel, not necessarily an event of positive probability.
Equivalently, for each fixed $x_:$, the channel specifies a probability measure over output sequences,
\begin{align}
    \Pr(Y_: \mid x_:) 
    \coloneqq 
    \{ \Pr(Y_: \in U \mid x_:) \}_{U \subseteq \mathcal{Y}_:}.
\end{align}
If an input process $X_:$ is also specified, the channel together with the law of $X_:$ induces a joint process $(X,Y)_:$, which can be marginalised to obtain an output process $Y_:$.

A channel $Y_{:} \mid X_{:}$ is stationary if its finite-dimensional laws are invariant under simultaneous shifts of input and output.
That is, for all $t \in \Z$, $L \in \N^+$, $x_{:} \in \mathcal X_{:}$, and $y \in \mathcal Y^L$,
\begin{align}
    \Pr(Y_{t:t+L} = y \mid x_{:})
    =
    \Pr(Y_{0:L} = y \mid (x_{t+z})_{z \in \Z}),
\end{align}
where the right-hand side uses the same input sequence, shifted so that the symbols around time $t$ are treated as the symbols around time $0$.

Stationary channels maps stationary input processes into output stationary processes~\citep{barnettComputationalMechanicsInput2015}.
A channel is causal, or anticipation-free, if future inputs beyond the output horizon do not affect the output law.
Equivalently, for all $t\in \mathbb Z$, $L\in \mathbb N^+$, $y\in \mathcal Y^L$, and input sequences $x_{:},x'_{:}\in \mathcal X_{:}$,
\begin{align}
    x_{:t+L}=x'_{:t+L}
    \implies
    \Pr(Y_{t:t+L}=y\mid x_{:})
    =
    \Pr(Y_{t:t+L}=y\mid x'_{:}).
\end{align}
In this work we focus on stationary causal channels.

We can also give a different presentation of this channel using a \emph{transducer}, as below.
\begin{definition}[Transducer~\citep{rosasAIVatFundamental2025}]
    A transducer presentation of $Y_: \mid X_:$ is given by a quadruple $(\mathcal{X}, \mathcal{Y}, \mathcal{Z}, \mathcal{T})$ where $\mathcal{Z}$ is a state space and $\mathcal{T}$ are transition dynamics by Markov kernels (or stochastic matrices in the finite case) $\mathcal{T} = \{ T^{(y \mid x)} \}_{x \in \mathcal{X}, y \in \mathcal{Y}}$ given by:
    \begin{align}
        T^{(y \mid x)}_{z \to z'} = \Pr(Z_{t+1} = z', Y_{t} = y \mid Z_{t} = z, X_{t} = x).
    \end{align}
\end{definition}

We then define an equivalence relation $\sim_\epsilon$ of pasts of a channel $Y_: \mid X_:$ as:
\begin{align}
    (x, y)_{:t} \sim_\epsilon (x, y)'_{:t} \iff \Pr(Y_{t:} \mid X_{t:}, (X, Y)_{:t} = (x, y)_{:t}) = \Pr(Y_{t:} \mid X_{t:}, (X, Y)_{:t} = (x, y)'_{:t})
\end{align}
which gives a channel's causal states.
This relation can also be seen as induced by a surjective map $\epsilon : (\mathcal{X}, \mathcal{Y})_{:t} \to \STATES$ given by $\epsilon((x, y)_{:t}) = \states = \{ (x, y)'_{:t} \mid (x, y)_{:t} \sim_\epsilon (x, y)'_{:t} \}$.
These causal states are also minimal sufficient statistics for prediction of future outputs~\citep{barnettComputationalMechanicsInput2015}.
The $\epsilon$-map induces dynamics over causal states captured by stochastic matrices $\mathcal{T} = \{ T^{(y \mid x)} \}_{x \in \mathcal{X}, y \in \mathcal{Y}}$ given by:
\begin{align}
    T^{(y \mid x)}_{\states \to \states'} = \Pr(\States_{t+1} = \states', Y_{t} = y \mid \States_{t} = \states, X_{t} = x).
\end{align}

\begin{definition}[$\epsilon$-transducer~\citep{barnettComputationalMechanicsInput2015}]
    The $\epsilon$-transducer of a channel $Y_: \mid X_:$ is the channel’s unique, minimal, unifilar (transducer) presentation of the channel, given by the tuple $(\mathcal{X}, \mathcal{Y}, \STATES, \mathcal{T})$.
\end{definition}

\section{Proofs}
\label{sec:proofs}

\begin{proof}[Proof of~\cref{prop:sinkState}]
    If $\widetilde \historiesfiltering_{:t}$ contains an occurrence of $\bot$, then by the definition of the channel, $\Pr\bigl(\widetilde \Observations_{t:}=\bot^\infty \mid \Actions_{t:}, \widetilde \HistoriesFiltering_{:t}=\widetilde \historiesfiltering_{:t}\bigr)=1$.
    Hence any two such histories induce the same future morph (i.e. conditional distribution of futures) and therefore lie in the same causal state. 
    The displayed transition identity follows immediately from the same observation.
\end{proof}

\begin{proof}[Proof of~\cref{th:supportRestrictedEnvironmentFactors}]
    Let $\historiesfiltering_{:t},\historiesfiltering'_{:t}\in\HISTORIESFILTERING_{:t}$ be ordinary histories such that
    $
        \epsilon_\Joint(\historiesfiltering_{:t})
        =
        \epsilon_\Joint(\historiesfiltering'_{:t})
    $.
    By joint causal equivalence (\cref{eq:jointCausalStates}), for every $L\geq 1$ and every ordinary word $(\actions,\observations)_{t:t+L-1}\in(\ACTIONS\times\OBSERVATIONS)^L$,
    \begin{align}
        &\Pr\bigl(
        (\Actions,\Observations)_{t:t+L-1}
        =
        (\actions,\observations)_{t:t+L-1}
        \mid
        \HistoriesFiltering_{:t}=\historiesfiltering_{:t}\bigr)
        \nonumber\\
        &\quad =
        \Pr\bigl(
        (\Actions,\Observations)_{t:t+L-1}
        =
        (\actions,\observations)_{t:t+L-1}
        \mid
        \HistoriesFiltering_{:t}=\historiesfiltering'_{:t}\bigr).
    \end{align}
    We now show that this joint-cylinder equality implies equality of the finite-dimensional laws of the totalised support-restricted environment channel.
    Specifically, we prove by induction on $L$ that, for, every input word $\actions_{t:t+L-1}\in\ACTIONS^L$, and every output word $\widetilde{\observations}_{t:t+L-1}\in\widetilde{\OBSERVATIONS}^L$, the previous equality implies the following
    \begin{align}
        &\Pr\bigl(\widetilde\Observations_{t:t+L-1}=\widetilde{\observations}_{t:t+L-1}
        \mid
        \Actions_{t:t+L-1}=\actions_{t:t+L-1},
        \widetilde\HistoriesFiltering_{:t}=\historiesfiltering_{:t}\bigr)
        \nonumber\\
        &\quad =
        \Pr\bigl(\widetilde\Observations_{t:t+L-1}=\widetilde{\observations}_{t:t+L-1}
        \mid
        \Actions_{t:t+L-1}=\actions_{t:t+L-1},
        \widetilde\HistoriesFiltering_{:t}=\historiesfiltering'_{:t}\bigr).
        \label{eq:refactoredSupportRestrictedCylinder}
    \end{align}
    Since the alphabets are finite, equality of all finite cylinders then implies equality of the full support-restricted future morphs.

    For $L=1$, fix $\actions\in\ACTIONS$ and write
    \begin{align}
        q_{\historiesfiltering}(\actions)
        \coloneqq
        \sum_{\observations\in\OBSERVATIONS}
        \Pr\bigl((\Actions,\Observations)_t=(\actions,\observations)
        \mid
        \HistoriesFiltering_{:t}=\historiesfiltering_{:t}\bigr).
    \end{align}
    Joint causal equivalence gives equality of the corresponding one-step joint probabilities, and hence
    $
        q_{\historiesfiltering}(\actions)
        =
        q_{\historiesfiltering'}(\actions)
    $.
    If this common value is zero, then $\actions$ is unsupported at both histories and the totalised channel emits $\bot$ with probability one at both histories.
    If it is positive, then $\actions$ is supported at both histories and, for every $\observations\in\OBSERVATIONS$,
    \begin{align}
        \Pr\bigl(\widetilde\Observations_t=\observations
        \mid
        \Actions_t=\actions,
        \widetilde\HistoriesFiltering_{:t}=\historiesfiltering_{:t}\bigr)
        =
        \frac{
        \Pr\bigl((\Actions,\Observations)_t=(\actions,\observations)
        \mid
        \HistoriesFiltering_{:t}=\historiesfiltering_{:t}\bigr)}
        {q_{\historiesfiltering}(\actions)}.
    \end{align}
    The same formula holds with $\historiesfiltering'_{:t}$ in place of $\historiesfiltering_{:t}$, with the same numerator and denominator.
    In this supported case both histories emit $\bot$ with probability zero.
    This proves~\cref{eq:refactoredSupportRestrictedCylinder} for $L=1$.

    Now assume the claim holds for length $L$ for every pair of ordinary histories with the same joint causal state, and consider a word of length $L+1$ from the fixed histories above.
    If the first output symbol is $\bot$, then by totalisation all later output symbols must be $\bot$; otherwise the word has probability zero from both histories.
    If all later output symbols are $\bot$, the probability of the whole word is the common one-step probability of emitting $\bot$ under the first queried action, since after the first $\bot$ the sink emits $\bot$ with probability one.

    It remains to consider the case where the first output symbol is an ordinary observation $\observations_t\in\OBSERVATIONS$.
    If the one-step probability of the pair $(\actions_t,\observations_t)$ is zero, then the whole word has probability zero from both histories.
    Otherwise the same pair is realisable from both histories.
    Conditioning the equal joint future laws on this common positive-probability prefix gives
    \begin{align}
        \epsilon_\Joint(\historiesfiltering_{:t},\actions_t,\observations_t)
        =
        \epsilon_\Joint(\historiesfiltering'_{:t},\actions_t,\observations_t).
    \end{align}
    The induction hypothesis applied to these extended ordinary histories gives equality of the conditional probabilities of the remaining output word $\widetilde{\observations}_{t+1:t+L}$ under the remaining input word $\actions_{t+1:t+L}$.
    By the chain rule, the probability of the whole length-$(L+1)$ word is the one-step probability of emitting $\observations_t$ under input $\actions_t$ multiplied by the corresponding remaining-word probability.
    The one-step factors are equal by the base case, and the remaining-word factors are equal by the induction hypothesis, so~\cref{eq:refactoredSupportRestrictedCylinder} holds for length $L+1$.

    This allows us to define a map from joint causal states to non-sink support-restricted environment states.
    Given a joint causal state $\states^\Joint$, choose any ordinary history $\historiesfiltering_{:t}$ such that $\epsilon_\Joint(\historiesfiltering_{:t})=\states^\Joint$, and set
    $
        \psi(\states^\Joint)
        \coloneqq
        \epsilon_{\mathrm{sr}-\Environment}(\historiesfiltering_{:t}).
    $
    This definition is independent of the chosen representative: if another ordinary history has the same joint causal state, the result just proved implies that it has the same support-restricted environment causal state.
    Thus $\psi$ is well defined.
    Moreover, every non-sink support-restricted environment state is the state of some ordinary history, and that ordinary history also has a joint causal state.
    Therefore every non-sink support-restricted environment state lies in the image of $\psi$, so $\psi$ is surjective.
    Finally, the factorising map is unique.
    Suppose that $\psi'$ is another map satisfying
    $
        \epsilon_{\mathrm{sr}-\Environment}
        =
        \psi'\circ\epsilon_\Joint
    $
    on ordinary histories.
    Let $\states^\Joint\in\STATES^\Joint$.
    Since $\epsilon_\Joint$ is surjective, there is an ordinary history $\historiesfiltering_{:t}$ with
    $
        \epsilon_\Joint(\historiesfiltering_{:t})=\states^\Joint.
    $
    Then
    $
        \psi'(\states^\Joint)
        =
        \psi'(\epsilon_\Joint(\historiesfiltering_{:t}))
        =
        \epsilon_{\mathrm{sr}-\Environment}(\historiesfiltering_{:t})
        =
        \psi(\states^\Joint).
    $
    Since this holds for every $\states^\Joint$, we have $\psi'=\psi$.
\end{proof}

\begin{proof}[Proof of~\cref{th:transitionsEnvironmentInduced}]
    The first point to check before reading the formulas from the channel definition is that they do not depend on the representative $\states^\Joint\in\psi^{-1}(\states)$.
    Let $\states_1^\Joint,\states_2^\Joint\in\psi^{-1}(\states)$, and choose ordinary histories $\historiesfiltering^1_{:t},\historiesfiltering^2_{:t}$ with
    $
        \epsilon_\Joint(\historiesfiltering^i_{:t})=\states_i^\Joint
    $
    for $i\in\{1,2\}$.
    Since both joint states map to $\states$, these histories have the same support-restricted environment causal state.
    Therefore the one-step output laws of the totalised channel $\widetilde\Observations_:\mid\Actions_:$ agree from the two histories under every queried action.

    Write
    \begin{align}
        p_i(\actions,\observations)
        \coloneqq
        \prob^\Joint(\actions,\observations\mid\states_i^\Joint),
        \qquad
        q_i(\actions)
        \coloneqq
        \sum_{\observations\in\OBSERVATIONS}
        p_i(\actions,\observations).
    \end{align}
    By~\cref{def:realisableActions,def:supportRestrictedEnv}, $q_i(\actions)=0$ exactly when the queried action $\actions$ is unsupported from $\historiesfiltering^i_{:t}$, in which case the totalised channel emits $\bot$ with probability one.
    Since the one-step output laws agree, this unsupported case occurs for $i=1$ if and only if it occurs for $i=2$.
    Equivalently,
    \begin{align}
        q_1(\actions)>0
        \iff
        q_2(\actions)>0.
    \end{align}
    In the supported case, the same channel definition gives the ordinary output probabilities by normalising the joint one-step law:
    \begin{align}
        \Pr\bigl(\widetilde\Observations_t=\observations
        \mid
        \Actions_t=\actions,
        \widetilde\HistoriesFiltering_{:t}=\historiesfiltering^i_{:t}
        \bigr)
        =
        \frac{p_i(\actions,\observations)}{q_i(\actions)}.
    \end{align}
    Equality of the one-step output laws therefore gives, for every $\observations\in\OBSERVATIONS$,
    \begin{align}
        \frac{p_1(\actions,\observations)}{q_1(\actions)}
        =
        \frac{p_2(\actions,\observations)}{q_2(\actions)}.
    \end{align}
    In particular, $p_1(\actions,\observations)>0$ if and only if $p_2(\actions,\observations)>0$.

    It remains to be checked that the successor state in the supported ordinary case also descends to the quotient.
    Suppose $p_1(\actions,\observations)>0$, equivalently $p_2(\actions,\observations)>0$, and let
    $
        \historiesfiltering^i_{:t+1}
        \coloneqq
        (\historiesfiltering^i_{:t},\actions,\observations).
    $
    Conditioning the equal future morphs on this common positive-probability prefix gives
    \begin{align}
        \epsilon_{\mathrm{sr}-\Environment}(\historiesfiltering^1_{:t+1})
        =
        \epsilon_{\mathrm{sr}-\Environment}(\historiesfiltering^2_{:t+1}).
    \end{align}
    By unifilarity of the canonical joint model,
    \begin{align}
        \epsilon_\Joint(\historiesfiltering^i_{:t+1})
        =
        \delta^\Joint(\states_i^\Joint,\actions,\observations).
    \end{align}
    Applying $\psi$ gives
    \begin{align}
        \psi\bigl(\delta^\Joint(\states_1^\Joint,\actions,\observations)\bigr)
        =
        \psi\bigl(\delta^\Joint(\states_2^\Joint,\actions,\observations)\bigr).
    \end{align}
    Thus the branch condition, the ordinary output probabilities, and the successor support-restricted state are independent of the chosen representative.

    The transition formula now follows directly from~\cref{def:supportRestrictedEnv}.
    If $q(\actions \mid \states^\Joint)=0$, the queried action is unsupported, so the channel emits $\bot$ and moves to the sink with probability one.
    If $q(\actions \mid \states^\Joint)>0$, the channel emits an ordinary observation $\observations$ with probability
    \begin{align}
        \Pr\bigl(\widetilde \Observations_{t}=\observations
        \mid
        \Actions_{t}=\actions,
        \States_{t}^{\mathrm{sr}-\Environment}=\states\bigr)
        =
        \frac{\prob^\Joint(\actions,\observations\mid\states^\Joint)}
        {q(\actions\mid\states^\Joint)}.
    \end{align}
    If $\prob^\Joint(\actions,\observations\mid\states^\Joint)=0$, the corresponding transition has probability zero; otherwise the successor state is $\psi\bigl(\delta^\Joint(\states^\Joint,\actions,\observations)\bigr)$.
    The sink-state transitions are those of~\cref{prop:sinkState}.
\end{proof}

\begin{proof}[Proof of~\cref{th:jointReconstructsSupportRestrictedEnvironment}]
    The input alphabet is the original action alphabet $\ACTIONS$ of the joint interface.
    The output alphabet is the totalised observation alphabet $\widetilde{\OBSERVATIONS}=\OBSERVATIONS\cup\{\bot\}$ introduced in~\cref{def:supportRestrictedEnv}.
    
    For the state space, \cref{th:supportRestrictedEnvironmentFactors} identifies the non-sink states as the quotient of joint causal states given by the map
    $
        \psi:\STATES^\Joint\to\STATES^{\mathrm{sr}-\Environment}\setminus\{\states^\Environment_\bot\}.
    $
    \Cref{prop:sinkState} supplies the remaining state, the absorbing totalisation sink $\states^\Environment_\bot$.
    Since every extended history is either ordinary or contains an occurrence of $\bot$, these two parts give the whole state space
    $
        \STATES^{\mathrm{sr}-\Environment}
        =
        \bigl(\STATES^{\mathrm{sr}-\Environment}\setminus\{\states^\Environment_\bot\}\bigr)
        \cup
        \{\states^\Environment_\bot\}.
    $

    It remains only to specify the transition family $\mathcal T^{\mathrm{sr}-\Environment}$.
    For non-sink states, \cref{th:transitionsEnvironmentInduced} gives the transition probabilities for every queried action and every output in $\widetilde{\OBSERVATIONS}$, and proves that these probabilities are independent of the chosen joint-state representative.
    For the sink state, \cref{prop:sinkState} gives the absorbing transitions:
    every queried action emits $\bot$ and remains at $\states^\Environment_\bot$ with probability one.
    Thus all entries of $\mathcal T^{\mathrm{sr}-\Environment}$ are determined.

    These four components are exactly the $\epsilon$-transducer
    $
        (\ACTIONS,\widetilde{\OBSERVATIONS},\STATES^{\mathrm{sr}-\Environment},\mathcal T^{\mathrm{sr}-\Environment})
    $
    of~\cref{def:canonicalSupportRestrictedEnvironmentModel}.
\end{proof}

\section{Running example}
\label{ex:example}

We spell out the running binary POMDP/controller example used in the main text.
The example separates the unrestricted environment model, the agent model, the realised joint model, and the canonical support-restricted environment model.
It also illustrates why the canonical support-restricted model induced by the realised coupling can be finite even when the unrestricted environment model is infinite.

\begin{table}[ht!]
    \centering
    \small
    \begin{tabular}{p{0.27\linewidth}p{0.12\linewidth}p{0.48\linewidth}}
        \toprule
        Model & Number of causal states & Reason \\
        \midrule
        Canonical unrestricted environment model
        & $\infty$
        & Arbitrarily long counterfactual hold continuations generate infinitely many posterior beliefs over the latent environment state, and these beliefs are predictively distinct. \\
        Canonical agent model
        & $3$
        & The deterministic controller has three pairwise predictively distinct memory modes, $\alpha, \beta, \gamma$. \\
        Canonical joint model
        & $5$
        & The realised coupling reaches five predictive configurations:
        $\states^\Joint_{\alpha}$,
        $\states^\Joint_{\beta 1}$,
        $\states^\Joint_{\beta 0}$,
        $\states^\Joint_{\gamma 11}$,
        $\states^\Joint_{\gamma 01}$. \\
        Canonical support-restricted environment model
        & $6$
        & The five ordinary states are the images of the joint states under $\psi$, together with the totalisation sink $\states^\Environment_\bot$. \\
        Canonical support-restricted agent model
        & $4$
        & The environment has full observation support in this example, so support restriction does not further merge the controller's three action-predictive modes (+ 1 totalisation sink $\states^\Agent_\bot$). \\
        \bottomrule
    \end{tabular}
    \caption{
        State-complexity summary for the running POMDP/controller example.
        The unrestricted environment model is infinite, while the realised joint model and the induced canonical support-restricted environment model are finite.
    }
    \label{tab:exampleStateComplexities}
\end{table}

\subsection{The setup: environment and agent}

We consider a coupled system with binary actions and observations,
\[
    \ACTIONS=\OBSERVATIONS=\{0,1\}.
\]
The environment has a hidden binary state $\Environment_t\in \{0,1\}$.
The agent is a deterministic finite-state controller with memory state $M_t\in \{\alpha, \beta, \gamma\}$.
At each time $t$, the agent emits $\Actions_t$ as a function of $M_t$, the environment updates $\Environment_{t+1}$ according to $\Actions_t$, the environment emits $\Observations_t$, and the controller updates $M_{t+1}$ from $(M_t, \Observations_t)$.
The observed joint symbol is
\[
    W_t\coloneqq(\Actions_t, \Observations_t)\in \ACTIONS\times\OBSERVATIONS.
\]

For the environment, fix a sensor noise parameter $\eta\in(0, \tfrac12)$.
The controlled transition is
\begin{align}
    \label{eq:env-transition}
    \Pr(\Environment_{t+1}=1\mid \Environment_t=e, \Actions_t=a)
    =
    \begin{cases}
        \tfrac12, & a=0, \\
        e, & a=1.
    \end{cases}
\end{align}
Thus action $0$ resets the hidden state to a fair coin, while action $1$ holds the hidden state fixed.
The observation is a noisy readout of the new hidden state:
\begin{align}
    \label{eq:env-emission}
    \Pr(\Observations_t=1\mid \Environment_{t+1}=1)=1-\eta,
    \qquad
    \Pr(\Observations_t=1\mid \Environment_{t+1}=0)=\eta.
\end{align}

The controller emits actions according to
\begin{align}
    \label{eq:agent-action}
    \Actions_t =
    \begin{cases}
        0, & M_t=\alpha, \\
        1, & M_t\in \{\beta, \gamma\}.
    \end{cases}
\end{align}
Its memory update is
\begin{align}
    \label{eq:agent-update}
    M_{t+1}
    =
    \begin{cases}
        \beta, & M_t=\alpha, \\
        \gamma, & M_t=\beta \text{ and } \Observations_t=1, \\
        \alpha, & M_t=\beta \text{ and } \Observations_t=0, \\
        \alpha, & M_t=\gamma.
    \end{cases}
\end{align}
Thus the controller always resets in mode $\alpha$, then holds in mode $\beta$.
If the observation after this hold is $1$, it holds once more in mode $\gamma$ before returning to reset.
If the observation after the first hold is $0$, it resets immediately.

\subsection{Canonical models}
\subsubsection{Canonical environment model}

Let
\[
    \prob_t
    \coloneqq
    \Pr(\Environment_t=1\mid \HistoriesFiltering_{:t}=h_{:t})
\]
be the filtering belief after an action-observation history $h_{:t}$.
For a queried action $a\in \ACTIONS$, define the predictive prior
\[
    q_t(a)
    \coloneqq
    \Pr(\Environment_{t+1}=1\mid \HistoriesFiltering_{:t}=h_{:t}, \Actions_t=a).
\]
By~\cref{eq:env-transition},
\begin{align}
    \label{eq:predictive-prior}
    q_t(0)=\tfrac12,
    \qquad
    q_t(1)=\prob_t.
\end{align}

After observing $\Observations_t=o$, Bayes' rule gives
\begin{align}
    \label{eq:belief-update-one}
    \prob_{t+1}
    =
    \frac{(1-\eta)q_t(a)}
    {(1-\eta)q_t(a)+\eta(1-q_t(a))}
    \qquad
    \text{if }o=1,
\end{align}
and
\begin{align}
    \label{eq:belief-update-zero}
    \prob_{t+1}
    =
    \frac{\eta q_t(a)}
    {\eta q_t(a)+(1-\eta)(1-q_t(a))}
    \qquad
    \text{if }o=0.
\end{align}
Under the hold action $a=1$, the one-step predictive probability of observing $1$ is
\begin{align}
    \label{eq:one-step-predictive}
    \Pr(\Observations_t=1\mid \HistoriesFiltering_{:t}=h_{:t}, \Actions_t=1)
    &=
    (1-\eta)\prob_t+\eta(1-\prob_t) \nonumber\\
    &=
    \eta+\prob_t(1-2\eta).
\end{align}
Since $\eta\in(0, \tfrac12)$, the map $p\mapsto\eta+p(1-2\eta)$ is strictly increasing.
Therefore two histories with different beliefs $\prob_t\neq\prob'_t$ are distinguished by the counterfactual one-step input $\Actions_t=1$, and so cannot be equivalent environment causal states.

It remains to see that infinitely many such beliefs are reachable under arbitrary future action inputs.
Let
\[
    L_t\coloneqq \frac{\prob_t}{1-\prob_t},
    \qquad
    r\coloneqq \frac{1-\eta}{\eta}>1.
\]
Under repeated hold actions, the odds update as
\[
    L_{t+1}
    =
    \begin{cases}
        L_t r, & \Observations_t=1, \\
        L_t r^{-1}, & \Observations_t=0.
    \end{cases}
\]
Starting from $\prob_0=\tfrac12$, so $L_0=1$, after $n$ hold steps with $k$ observations equal to $1$,
\begin{align}
    \label{eq:belief-after-n}
    L_n=r^{2k-n},
    \qquad
    \prob_n
    =
    \frac{r^{2k-n}}{1+r^{2k-n}}
    =
    \frac{(1-\eta)^k\eta^{n-k}}
    {(1-\eta)^k\eta^{n-k}+\eta^k(1-\eta)^{n-k}}.
\end{align}
For example, taking $k=n$ gives the infinite sequence $L_n=r^n$.
Hence there are infinitely many distinct posterior beliefs, and each gives a distinct environment causal state by~\cref{eq:one-step-predictive}.
Thus the unrestricted canonical environment model has infinitely many causal states.

\subsubsection{Canonical agent model}

Viewed as a channel $\Actions_{:} \mid \Observations_{:}$, the deterministic controller has three causal states.
Each memory mode $M_t \in \{\alpha, \beta, \gamma\}$ determines the future action stream under any future observation continuation.
The three modes are pairwise predictively distinct.

Mode \(\alpha\) differs from \(\beta\) and \(\gamma\) already at the current action: \(\alpha\) emits \(0\), while \(\beta\) and \(\gamma\) emit \(1\).
Modes \(\beta\) and \(\gamma\) are distinct because under the observation \(\Observations_t = 1\), \(\beta\) transitions to \(\gamma\), so \(\Actions_{t+1} = 1\), whereas \(\gamma\) transitions to \(\alpha\), so \(\Actions_{t+1}=0\).
% Mode $\alpha$ differs from $\beta$ and $\gamma$: under any current observation, $\alpha$ transitions to $\beta$, so the next action is $\Actions_{t+1}=1$, while $\gamma$ transitions to $\alpha$, so the next action is $\Actions_{t+1}=0$; and $\beta$ differs from $\alpha$ under the observation $\Observations_t=0$, since $\beta$ then transitions to $\alpha$.
% Modes $\beta$ and $\gamma$ are also distinct: under the observation $\Observations_t=1$, $\beta$ transitions to $\gamma$, so the next action is $\Actions_{t+1}=1$, whereas $\gamma$ transitions to $\alpha$, so the next action is $\Actions_{t+1}=0$.
Therefore
\[
    \STATES^\Agent=\{\alpha, \beta, \gamma\},
    \qquad
    |\STATES^\Agent|=3.
\]

\subsubsection{Canonical joint model}

For the joint model, we consider the realised coupled process $W_t = (\Actions_t, \Observations_t)$.
The process is generally not Markov on the emitted symbols alone, but it has a finite predictive presentation.
The relevant predictive configurations are determined by the controller mode together with the belief values that can occur under the realised coupling.

After a reset from mode $\alpha$, the latent state is a fair coin, so the next posterior is $1-\eta$ after observing $1$ and $\eta$ after observing $0$.
From mode $\beta$, the controller holds once.
Starting from the two beliefs $\eta$ and $1-\eta$, if the observation is $1$, the posterior becomes either
\[
    p^{(11)}
    \coloneqq
    \frac{(1-\eta)^2}{(1-\eta)^2+\eta^2}
\]
from prior $1-\eta$, or $\tfrac12$ from prior $\eta$.
If the observation is $0$, the controller returns to $\alpha$, where the next reset makes the precise posterior irrelevant for future prediction.
Mode $\gamma$ performs one further hold and then returns to $\alpha$.

Thus the realised joint process has the following five predictive states:
\[
    \STATES^\Joint
    =
    \{
        \states^\Joint_\alpha,
        \states^\Joint_{\beta 1},
        \states^\Joint_{\beta 0},
        \states^\Joint_{\gamma 11},
        \states^\Joint_{\gamma 01}
    \},
\]
where
\[
\begin{array}{rcl}
    \states^\Joint_\alpha &:& M_t=\alpha, \\
    \states^\Joint_{\beta 1} &:& M_t=\beta, \ \prob_t=1-\eta, \\
    \states^\Joint_{\beta 0} &:& M_t=\beta, \ \prob_t=\eta, \\
    \states^\Joint_{\gamma 11} &:& M_t=\gamma, \ \prob_t=p^{(11)}, \\
    \states^\Joint_{\gamma 01} &:& M_t=\gamma, \ \prob_t=\tfrac12.
\end{array}
\]

Let
\[
    c\coloneqq (1-\eta)^2+\eta^2,
    \qquad
    d\coloneqq 2\eta(1-\eta),
    \qquad
    g\coloneqq \eta+p^{(11)}(1-2\eta).
\]
The nonzero transitions of the joint presentation are:
\[
\begin{array}{rclcrcl}
    \states^\Joint_\alpha
    &\xrightarrow{(0,1)}&
    \states^\Joint_{\beta 1}
    &\text{with probability}&
    \tfrac12,
    \qquad&
    \states^\Joint_\alpha
    &\xrightarrow{(0,0)}
    \states^\Joint_{\beta 0}
    \text{ with probability }\tfrac12, \\[0.5ex]
    \states^\Joint_{\beta 1}
    &\xrightarrow{(1,1)}&
    \states^\Joint_{\gamma 11}
    &\text{with probability}&
    c,
    \qquad&
    \states^\Joint_{\beta 1}
    &\xrightarrow{(1,0)}
    \states^\Joint_{\alpha}
    \text{ with probability }d, \\[0.5ex]
    \states^\Joint_{\beta 0}
    &\xrightarrow{(1,1)}&
    \states^\Joint_{\gamma 01}
    &\text{with probability}&
    d,
    \qquad&
    \states^\Joint_{\beta 0}
    &\xrightarrow{(1,0)}
    \states^\Joint_{\alpha}
    \text{ with probability }c, \\[0.5ex]
    \states^\Joint_{\gamma 11}
    &\xrightarrow{(1,1)}&
    \states^\Joint_{\alpha}
    &\text{with probability}&
    g,
    \qquad&
    \states^\Joint_{\gamma 11}
    &\xrightarrow{(1,0)}
    \states^\Joint_{\alpha}
    \text{ with probability }1-g, \\[0.5ex]
    \states^\Joint_{\gamma 01}
    &\xrightarrow{(1,1)}&
    \states^\Joint_{\alpha}
    &\text{with probability}&
    \tfrac12,
    \qquad&
    \states^\Joint_{\gamma 01}
    &\xrightarrow{(1,0)}
    \states^\Joint_{\alpha}
    \text{ with probability }\tfrac12.
\end{array}
\]
This presentation is unifilar: given the current state and emitted symbol $(\Actions_t, \Observations_t)$, the next state is determined.
The five states are pairwise predictively distinct.
The state $\states^\Joint_\alpha$ is distinguished from all others because it emits action $0$ rather than action $1$.
The states $\states^\Joint_{\beta 1}$ and $\states^\Joint_{\beta 0}$ have different probabilities of observing $1$.
The states $\states^\Joint_{\gamma 11}$ and $\states^\Joint_{\gamma 01}$ also have different probabilities of observing $1$.
Finally, $\beta$-states cannot merge with $\gamma$-states because their successor structure differs: from a $\beta$-state one may transition to a $\gamma$-state, while from a $\gamma$-state one returns to $\alpha$.
Therefore the canonical joint model has exactly five causal states.

\subsection{Canonical support-restricted models}
\subsubsection{Canonical support-restricted environment model}

The unrestricted environment model is infinite because it must distinguish predictions under arbitrary future action continuations, including arbitrarily long hold sequences.
Under the realised controller, however, arbitrarily long hold continuations are not supported.
The controller emits action $0$ in mode $\alpha$, action $1$ in modes $\beta$ and $\gamma$, and then returns to $\alpha$ after at most two consecutive holds.

By the general construction in~\cref{sec:supportRestricted}, each ordinary canonical support-restricted environment state is the image under $\psi$ of a joint causal state.
In this example, the five non-sink images are pairwise distinct:
\[
\begin{array}{rcl}
    \states_\alpha^{\mathrm{sr}-\Environment} &\coloneqq& \psi(\states^\Joint_\alpha), \\
    \states_{\beta 1}^{\mathrm{sr}-\Environment} &\coloneqq& \psi(\states^\Joint_{\beta 1}), \\
    \states_{\beta 0}^{\mathrm{sr}-\Environment} &\coloneqq& \psi(\states^\Joint_{\beta 0}), \\
    \states_{\gamma 11}^{\mathrm{sr}-\Environment} &\coloneqq& \psi(\states^\Joint_{\gamma 11}), \\
    \states_{\gamma 01}^{\mathrm{sr}-\Environment} &\coloneqq& \psi(\states^\Joint_{\gamma 01}).
\end{array}
\]
Together with the totalisation sink $\states^\Environment_\bot$, the canonical support-restricted environment state space is
\[
    \STATES^{\mathrm{sr}-\Environment}
    =
    \{
        \states_\alpha^{\mathrm{sr}-\Environment},
        \states_{\beta 1}^{\mathrm{sr}-\Environment},
        \states_{\beta 0}^{\mathrm{sr}-\Environment},
        \states_{\gamma 11}^{\mathrm{sr}-\Environment},
        \states_{\gamma 01}^{\mathrm{sr}-\Environment},
        \states^\Environment_\bot
    \}.
\]

The supported action sets are
\[
    \ACTIONS_1(\states_\alpha^{\mathrm{sr}-\Environment})=\{0\},
    \qquad
    \ACTIONS_1(\states_{\beta 1}^{\mathrm{sr}-\Environment})
    =
    \ACTIONS_1(\states_{\beta 0}^{\mathrm{sr}-\Environment})
    =
    \ACTIONS_1(\states_{\gamma 11}^{\mathrm{sr}-\Environment})
    =
    \ACTIONS_1(\states_{\gamma 01}^{\mathrm{sr}-\Environment})
    =
    \{1\}.
\]
Unsupported queried actions are sent to $\states^\Environment_\bot$, and from $\states^\Environment_\bot$ every queried action emits $\bot$ and remains in $\states^\Environment_\bot$.
The ordinary transitions are obtained from the joint transitions above by applying~\cref{th:transitionsEnvironmentInduced}.

The six states are pairwise distinct.
The sink state is distinct from every ordinary state because it emits $\bot$ forever.
The state $\states_\alpha^{\mathrm{sr}-\Environment}$ is distinct from every other ordinary state because querying action $1$ immediately yields $\bot$, whereas action $1$ is supported from the other ordinary states.
The two $\beta$-states are distinct because under action $1$ they assign different probabilities to $\Observations_t=1$, namely $c$ and $d$.
The two $\gamma$-states are distinct because under action $1$ they assign different probabilities to $\Observations_t=1$, namely $g$ and $\tfrac12$.
No $\beta$-state can merge with any $\gamma$-state: under the queried action word $11$, a $\beta$-state has positive probability of producing two ordinary observations before any occurrence of $\bot$, while from a $\gamma$-state the first queried hold returns the controller to $\alpha$, so the second queried hold emits $\bot$ almost surely.
Therefore the canonical support-restricted environment model has exactly six causal states.

\subsubsection{Canonical support-restricted agent model}

For the agent side, support restriction would restrict future observation continuations to those supported by the realised environment.
In this example, the noisy observation kernel has full support because $\eta\in(0, \tfrac12)$.
Consequently, every finite observation continuation has positive probability under the coupled process.
The canonical support-restricted agent model therefore has the same three causal states as the unrestricted canonical agent model, together with a sink state $\states^\Agent_\bot$:
$
    \STATES^{\mathrm{sr}-\Agent}
    =
    \STATES^\Agent \cup \states^\Agent_\bot.
    % =
    % \{\alpha, \beta, \gamma\}.
$
This is specific to the present example. 
In general, environmental support restrictions can merge or alter the canonical support-restricted agent states.

\section{Relation between the three canonical models, with extra structure}
\label{sec:extraStructure}

Suppose that the environment and agent models are not only given as separate canonical channels, but as compatible components of a factorised closed-loop presentation.
Concretely, assume that:
\begin{itemize}
    \item the environment model has causal state space $\STATES^\Environment$ and the agent model has causal state space $\STATES^\Agent$, with compatible alphabets and timing conventions,
    \item closing the two channels defines a stationary joint process on $(\Actions, \Observations)_{:}$,
    \item the product state
        $
            Z_{t} \coloneqq (\States^\Environment_{t}, \States^\Agent_{t})
        $
        is sufficient for predicting the realised joint future, but is \emph{not} a causal state of the joint process in general.
        Equivalently, assume the conditional independence
        \begin{align}
            (\Actions, \Observations)_{t:}
            \indep
            \HistoriesFiltering_{:t}
            \mid
            Z_{t},
        \end{align}
        or, in probabilistic form,
        $
            \Pr((\Actions, \Observations)_{t:} \mid \HistoriesFiltering_{:t})
            =
            \Pr((\Actions, \Observations)_{t:}\mid Z_{t}),
        $
    \item the reachable product states, i.e. the states in the support of $Z_{t}$:
        \begin{align}
            \mathrm{Reach}(\STATES^\Environment \times \STATES^\Agent)
            \coloneqq
            \{(\states^\Environment, \states^\Agent) \in
            \STATES^\Environment \times \STATES^\Agent
            \mid
            \Pr(Z_{t} = (\states^\Environment, \states^\Agent)) > 0
            \},
        \end{align}
        carry well-defined transition dynamics and form a presentation of the realised joint process.
\end{itemize}

Under these assumptions, the reachable subset $\mathrm{Reach}(\STATES^\Environment\times\STATES^\Agent)$ is a sufficient presentation of the joint process.
Since the canonical joint model is minimal among sufficient predictive presentations, its causal states are a quotient of this reachable product presentation.
Equivalently, there is a surjective map
$
    \chi:\mathrm{Reach}(\STATES^\Environment\times\STATES^\Agent)\to\STATES^\Joint
$
such that, for every history $\historiesfiltering_{:t}$ inducing the reachable product state
$
    z(\historiesfiltering_{:t})
    \coloneqq
    \bigl(
        \epsilon_\Environment(\historiesfiltering_{:t}),
        \epsilon_\Agent(\historiesfiltering_{:t})
    \bigr),
$
we have
$
    \epsilon_\Joint(\historiesfiltering_{:t})
    =
    \chi\bigl(z(\historiesfiltering_{:t})\bigr).
$
Thus, in the finite case,
\begin{align}
    |\STATES^\Joint|
    \leq
    |\mathrm{Reach}(\STATES^\Environment \times \STATES^\Agent)|
    \leq
    |\STATES^\Environment|\,|\STATES^\Agent|.
\end{align}
This product bound is therefore a property of an explicitly specified factorised closed-loop presentation satisfying a joint-sufficiency condition.

\end{appendices}

\end{document}

%% file: math_commands.tex
\newtheorem{theorem}{Theorem}%[section]

\newtheorem{proposition}[theorem]{Proposition}

\theoremstyle{definition}
\newtheorem{definition}[theorem]{Definition}
\newtheorem{example}[theorem]{Example}
\newtheorem{remark}[theorem]{Remark}

\newcommand{\indep}{\perp\!\!\!\perp}

\newcommand{\N}{\mathbb{N}}
\newcommand{\Z}{\mathbb{Z}}

\newcommand{\prob}{p}

\newcommand{\Actions}{A}
\newcommand{\actions}{\MakeLowercase{\Actions}}
\newcommand{\ACTIONS}{\mathcal{\Actions}}

\newcommand{\supp}{\operatorname{supp}}

\DeclarePairedDelimiterX{\infdivx}[2]{(}{)}{%
  #1\;\delimsize\|\;#2%
}

\newcommand{\Environment}{E}

\newcommand{\Agent}{M}

\newcommand{\Joint}{J}

\newcommand{\States}{S}
\newcommand{\states}{\MakeLowercase{\States}}
\newcommand{\STATES}{\mathcal{\States}}

\newcommand{\Observations}{O}
\newcommand{\observations}{\MakeLowercase{\Observations}}
\newcommand{\OBSERVATIONS}{\mathcal{\Observations}}

\newcommand{\HistoriesFiltering}{H}
\newcommand{\historiesfiltering}{\MakeLowercase{\HistoriesFiltering}}
\newcommand{\HISTORIESFILTERING}{\mathcal{\HistoriesFiltering}}

%% file: 2cat.tikzstyles
\tikzstyle{function clear}=[fill=white, draw=dgreen, shape=circle, minimum size=6 pt, inner sep=2 pt]
\tikzstyle{function black}=[fill=black, draw=black, shape=circle, inner sep=1, minimum size=2mm]
\tikzstyle{function red}=[fill=red, draw=red, shape=circle, inner sep=1, minimum size=2mm]
\tikzstyle{function blue}=[fill=blue, draw=blue, shape=circle, inner sep=1, minimum size=2mm]
\tikzstyle{function magenta}=[fill=magenta, draw=magenta, shape=circle, inner sep=1, minimum size=2mm]
\tikzstyle{function teal}=[fill=teal, draw=teal, shape=circle, inner sep=1, minimum size=2mm]
\tikzstyle{function brown}=[fill=brown, draw=brown, shape=circle, inner sep=1, minimum size=2mm]
\tikzstyle{function orange}=[fill=orange, draw=orange, shape=circle, inner sep=1, minimum size=2mm]
\tikzstyle{new style 0}=[fill={rgb,255: red,76; green,169; blue,170}, draw={rgb,255: red,76; green,169; blue,170}, shape=circle, inner sep=1]
\tikzstyle{box-.5-.5}=[fill=none, draw=black, shape=rectangle, minimum width=5mm, minimum height=5mm, thick]
\tikzstyle{box-.5-1}=[fill=none, draw=black, shape=rectangle, minimum width=5mm, minimum height=10mm, thick]
\tikzstyle{box-1-.5}=[fill=none, draw=black, shape=rectangle, minimum width=1cm, minimum height=5mm, thick]
\tikzstyle{fbox-1-.5-blue}=[fill={blue!10}, draw=black, shape=rectangle, minimum width=1cm, minimum height=5mm, thick]
\tikzstyle{box-1-.5-blue}=[fill=none, draw=blue, shape=rectangle, minimum width=1cm, minimum height=5mm, thick]
\tikzstyle{fbox-1-.5-red}=[fill={red!10}, draw=black, shape=rectangle, minimum width=1cm, minimum height=5mm, thick]
\tikzstyle{box-1-.5-red}=[fill=none, draw=red, shape=rectangle, minimum width=1cm, minimum height=5mm, thick]
\tikzstyle{box-1-1}=[fill=none, draw=black, shape=rectangle, minimum width=1cm, minimum height=1cm, thick]
\tikzstyle{box-1-1-red}=[fill=none, draw=red, shape=rectangle, minimum width=1cm, minimum height=1cm, thick]
\tikzstyle{box-1-2}=[fill=none, draw=black, shape=rectangle, minimum width=1cm, minimum height=2cm, thick]
\tikzstyle{box-1.5-.5}=[fill=none, draw=black, shape=rectangle, minimum width=1.5cm, minimum height=5mm, thick]
\tikzstyle{box-1.5-1}=[fill=none, draw=black, shape=rectangle, minimum width=1.5cm, minimum height=1cm, thick]
\tikzstyle{box-2-.5}=[fill=none, draw=black, shape=rectangle, minimum width=2cm, minimum height=.5cm, thick]
\tikzstyle{box-2-1}=[fill=none, draw=black, shape=rectangle, minimum width=2cm, minimum height=1cm, thick]
\tikzstyle{box-3-.5}=[fill=none, draw=black, shape=rectangle, minimum width=3cm, minimum height=.5cm, thick]
\tikzstyle{box-3-2}=[fill=none, draw=black, shape=rectangle, minimum width=3cm, minimum height=2cm, thick]
\tikzstyle{box-3-4}=[fill=none, draw=black, shape=rectangle, minimum width=3cm, minimum height=4cm, rounded corners, thick]
\tikzstyle{box-4-.5}=[fill=none, draw=black, shape=rectangle, minimum width=4cm, minimum height=.5cm, thick]
\tikzstyle{box-4-3}=[fill=none, draw=black, shape=rectangle, minimum width=4cm, minimum height=3cm, thick]
\tikzstyle{dashed-box-4-3}=[fill=none, draw=black, shape=rectangle, minimum width=4cm, minimum height=3cm, thick, rounded corners, dashed]
\tikzstyle{box-4-5}=[fill=none, draw=black, shape=rectangle, minimum width=4cm, minimum height=5cm, thick]
\tikzstyle{box-5-2}=[fill=none, draw=black, shape=rectangle, minimum width=5cm, minimum height=2cm, thick]
\tikzstyle{box-5-3}=[fill=none, draw=black, shape=rectangle, minimum width=5cm, minimum height=3cm, thick]
\tikzstyle{box-5-4}=[fill=none, draw=black, shape=rectangle, minimum width=5cm, minimum height=4cm, thick]
\tikzstyle{box-6-3}=[fill=none, draw=black, shape=rectangle, minimum width=6cm, minimum height=3cm, thick]
\tikzstyle{box-6-4}=[fill=none, draw=black, shape=rectangle, minimum width=6cm, minimum height=4cm, thick]
\tikzstyle{box-6-5}=[fill=none, draw=black, shape=rectangle, minimum width=6cm, minimum height=5cm, thick]
\tikzstyle{box-7-5}=[fill=none, draw=black, shape=rectangle, minimum width=7cm, minimum height=5cm, thick]
\tikzstyle{box-8-4}=[fill=none, draw=black, shape=rectangle, minimum width=8cm, minimum height=4cm, thick]
\tikzstyle{box-10-6}=[fill=none, draw=black, shape=rectangle, minimum width=10cm, minimum height=6cm, thick]
\tikzstyle{box-13-6}=[fill=none, draw=black, shape=rectangle, minimum width=13cm, minimum height=6cm, thick]
\tikzstyle{ellipse-20-12}=[fill=none, draw=red, shape=ellipse, minimum width=20cm, minimum height=12cm, thick]
\tikzstyle{ellipse-14-8}=[fill=none, draw=yellow, shape=ellipse, minimum width=14cm, minimum height=8cm, thick]
\tikzstyle{ellipse-8-4.5}=[fill=none, draw=green, shape=ellipse, minimum width=8cm, minimum height=4.5cm, thick]

\tikzstyle{object red}=[-, draw={rgb,255: red,191; green,0; blue,64}, thick]
\tikzstyle{object blue}=[-, draw=blue, thick]
\tikzstyle{object orange}=[-, draw={rgb,255: red,255; green,128; blue,0}, thick]
\tikzstyle{big dashes}=[-, dash pattern=on 4mm off 2mm, thick, dashed]
\tikzstyle{arrow}=[->]
\tikzstyle{thick-line}=[-, thick]
\tikzstyle{thick-blue-line}=[-, thick, draw=blue]
\tikzstyle{thick-red-line}=[-, thick, draw=red]
\tikzstyle{thick-double-line}=[-, thick, double]
\tikzstyle{thick-arrow}=[->, thick]
\tikzstyle{thick-blue-arrow}=[->, thick, draw=blue]
\tikzstyle{thick-red-arrow}=[->, thick, draw=red]
\tikzstyle{tube-blue}=[-, fill=none, draw={rgb,255: red,103; green,161; blue,255}, line width=6]
\tikzstyle{tube-green}=[-, draw={rgb,255: red,110; green,207; blue,108}, line width=6, fill=none]